\let\originalleft\left
\let\originalright\right
\renewcommand{\left}{\mathopen{}\mathclose\bgroup\originalleft}
\renewcommand{\right}{\aftergroup\egroup\originalright}
\newtheorem{theorem}{\it{\bf Theorem}}
\newtheorem{lemma}{\it{\bf Lemma}}
\newcommand{\nn}{\nonumber}
\newcommand{\defi}{\triangleq}
\DeclareMathOperator{\E}{\mathbb{E}}
\newcommand{\Ew}{\E_{\bw}}
\newcommand{\Ek}{\E_k}
\newcommand\given[1][]{\:#1\Bigg\vert\:}
\newcommand{\LVert }{\left \Vert}
\newcommand{\RVert }{\right \Vert}
\newcommand{\sql }{\left[}
\newcommand{\sqr }{\right]}
\newcommand{\crl}{\left\{ }
\newcommand{\crr}{\right\} }
\newcommand{\prl}{\left( }
\newcommand{\prr}{\right) }
\newcommand{\eps}{\varepsilon}
\DeclareMathOperator*{\argmin}{\arg\!\min\;}
\newcommand{\1}{\mathbbm{1}}
\newcommand{\mA}{\mathbf A}
\newcommand{\mE}{\mathbf E}
\newcommand{\mI}{\mathbf I}
\newcommand{\bQ}{\mathbf{Q}}
\newcommand{\bv}{\mathbf v}
\newcommand{\bW}{\mathbf W}
\newcommand{\bw}{\mathbf{w}}
\newcommand{\Tr}{\mathrm{Tr}}
\newcommand{\diag}{\mathrm{diag}}
\newcommand{\stn}{\mathbf{s}_{t,n}}
\newcommand{\Atn}{\mathbf{A}_{t,n}}
\newcommand{\DF}{\nabla F}
\newcommand{\DFw}{\DF \prl \bw \prr}
\newcommand{\DFk}{\DF \prl \bw_k \prr}
\newcommand{\DFak}{\DF \prl \bw_k \prr}
\newcommand{\DFok}{\DF \prl \bwo_k \prr}
\newcommand{\mMw}{\mathbf{M} \prl \bw \prr}
\newcommand{\mMwkOpt}{\mathbf{M} \prl \bw_k^{\mathrm{o}} \prr}
\newcommand{\cmin}{c_{\mathrm{min}}}
\newcommand{\Fstar}{F\prl \bw_* \prr}
\newcommand{\gwt}{\mathbf{g} \prl \bw, \theta \prr}
\newcommand{\giwt}{\mathbf{g}_i \prl \bw, \theta \prr}
\newcommand{\Gw}{\mathbf{G} \prl \bw \prr}
\newcommand{\Gwk}{\mathbf{G} \prl \bw_k \prr}
\newcommand{\Gwn}{\mathbf{G} \prl \bw, n \prr}
\newcommand{\Gtw}{\mathbf{G}_t \prl \bw \prr}
\newcommand{\Gtwn}{\mathbf{G}_t\prl \bw, n \prr}
\newcommand{\Gtwnt}{\mathbf{G}_t\prl \bw, n_t \prr}
\newcommand{\Giwn}{\mathbf{G}_i \prl \bw, n \prr}
\newcommand{\Gjwn}{\mathbf{G}_j \prl \bw, n \prr}
\newcommand{\Gsw}{\mathbf{G}^{*} \prl \bw \prr}
\newcommand{\GtoknOpt}{\mathbf{G}_{t}^{\mathrm{o}} \prl \bwok,  \nOpt \prr}
\newcommand{\Gok}{\mathbf{G}^{\mathrm{o}} \prl \bwok \prr}
\newcommand{\Gtok}{\mathbf{G}_{t}^{\mathrm{o}} \prl \bwok \prr}
\newcommand{\GOnewn}{\mathbf{G}_{1} \prl \bw, n\prr}
\newcommand{\Ggammawn}{\mathbf{G}_{\gammatn} \prl \bw, n \prr}
\newcommand{\GOnewOpt}{\mathbf{G}_{1} \prl \bw, \nOpt \prr}
\newcommand{\GtwOpt}{\mathbf{G}_{t} \prl \bw, \nOpt \prr}
\newcommand{\GTwOpt}{\mathbf{G}_{T} \prl \bw, \nOpt \prr}
\newcommand{\Uwt}{\mathbf{U} \prl \bw, \theta \prr}
\newcommand{\Uiwt}{\mathbf{U}_i \prl \bw, \theta \prr}
\newcommand{\vUwti}{\mathbf{U} \prl \bw, \theta \prr_i}
\newcommand{\MBOn}{\mathrm{MBO} \prl n, B, \bw \prr}
\newcommand{\MBOopt}{\mathrm{MBO} \prl \nOpt, B, \bw \prr}
\newcommand{\MBOoptk}{\mathrm{MBO} \prl \nOptk, B, \bwok \prr}
\newcommand{\MBOnt}{\mathrm{MBO} \prl n_t, B, \bw \prr}
\newcommand{\sigman}{\sigma_n^2}
\newcommand{\sigmaOpt}{\sigma_{*}^2}
\newcommand{\sigmaOptk}{\sigma_{*,k}^2}
\newcommand{\sigmant}{\sigma_{n_t}^2}
\newcommand{\nOpt}{n_{*}}
\newcommand{\nOptk}{n_{*,k}}
\newcommand{\bmtn}{\mathbf{m}_t \prl n \prr}
\newcommand{\bqtn}{\mathbf{q}_t\prl n \prr}
\newcommand{\bCtn}{\mathbf{Cov}_t \prl n \prr}
\newcommand{\Vtn}{V_t \prl n\prr}
\newcommand{\VtnOpt}{V_t \prl \nOpt \prr}
\newcommand{\Vsn}{V_s^{\prime} \prl \nOpt \prr}
\newcommand{\gammatn}{\gamma_t \prl n \prr}
\newcommand{\gammatnOpt}{\gamma_t \prl \nOpt \prr}
\newcommand{\gammaTn}{\gamma_T \prl n \prr}
\newcommand{\ZTw}{Z_T\prl \bw \prr}
\newcommand{\ZTwk}{Z_T\prl \bw_k \prr}
\newcommand{\OTk}{O_{T,k}}
\newcommand{\Cone}{\mathrm{C}_1 \prl \bw \prr}
\newcommand{\Conek}{\mathrm{C}_{1,k} \prl \bw \prr}
\newcommand{\Ctwo}{\mathrm{C}_{2}}
\newcommand{\Ctwok}{\mathrm{C}_{2,k}}
\newcommand{\Sw}{S \prl \bw \prr}
\newcommand{\Swk}{S \prl \bw_k \prr}
\newcommand{\Swo}{S \prl \bw_k^{\mathrm{o}} \prr}
\newcommand{\bwo}{\bw^{\mathrm{o}}}
\newcommand{\bwok}{\bw^{\mathrm{o}}_k}
\newcommand{\JAk}{J_{k,\eta}}
\newcommand{\JAkp}{J_{k+1,\eta}}
\newcommand{\JOk}{J_{k,\eta}^{\mathrm{o}}}
\newcommand{\JOkp}{J_{k+1,\eta}^{\mathrm{o}}}
\newcommand{\taoAlg}{\tau_k \prl \eta_k \prr}
\newcommand{\taoOpt}{\tau_k^{\mathrm{o}} \prl \eta_k \prr}
\begin{document}

\title{EE-Grad: Exploration and Exploitation for Cost-Efficient Mini-Batch SGD\thanks{This work was supported in part by Systems on Nanoscale Information fabriCs (SONIC), one of the six SRC STARnet Centers, sponsored by MARCO and DARPA, and in part by the Center for Science of Information (CSoI), an NSF Science and Technology Center, under
grant agreement CCF-0939370.}}

\author{Mehmet A.~Donmez\thanks{donmez2@illinois.edu} \and Maxim Raginsky\thanks{maxim@illinois.edu} \and Andrew C.~Singer\thanks{acsinger@illinois.edu}}

\maketitle

\begin{abstract}
We present a generic framework for trading off fidelity and cost in computing stochastic gradients when the costs of acquiring stochastic gradients of different quality are not known a priori. We consider a mini-batch oracle that distributes a limited query budget over a number of stochastic gradients and aggregates them to estimate the true gradient. Since the optimal mini-batch size depends on the unknown cost-fidelity function,
we propose an algorithm, {\it EE-Grad}, that sequentially explores the performance of
mini-batch oracles and exploits the accumulated knowledge to estimate the one
achieving the best performance in terms of cost-efficiency.
We provide performance guarantees for EE-Grad with respect to the optimal mini-batch oracle,
and illustrate these results in the case of strongly convex objectives.
We also provide a simple numerical example that corroborates our theoretical findings.
\end{abstract}

\section{Introduction}
Stochastic gradient methods are widely used to solve large-scale optimization problems
in machine learning.
Given a differentiable objective function $F: \mathbb{R}^d \rightarrow \mathbb{R}$ with
a gradient $\DF$, a stochastic gradient descent (SGD) algorithm chooses an initial iterate
$\bw_1\in\mathbb{R}^d$, and, on each iteration $k=1,\ldots,K$,
it uses a noisy gradient $ \Gwk$ instead of $\DFk$ to set the next iterate as
$
	\bw_{k+1} = \bw_k - \eta_k \Gwk,
$
where $\eta_k>0$ is a step size.
The overall performance of stochastic gradient methods is controlled by the noise in $\Gwk$
with respect to $\DFk$ \cite{BottouCN2016}.
Often, noisy gradients with large variances lead to slower convergence and degraded performance \cite{WangCSX2013}.

Mini-batch stochastic gradient methods, as well as their distributed or parallelized variants, have been proposed to tackle some of these issues \cite{LiZCS2014,ZinkevichWSL2010}.
Recently, federated learning \cite{McMahanMRHA2016} has been proposed as a decentralized
optimization framework, where SGD runs on a large dataset distributed across a number of
devices performing local model updates and sending them to a centralized server that aggregates
them, under privacy and communication constraints.
In typical resource- and budget-constrained applications, as the mini-batch size increases, the cost available to be allocated
to each single stochastic gradient in the mini-batch decreases, so that its quality degrades, i.e.,
its noise variance increases.
A common approach is to focus on the tradeoff between the rate of convergence
and the computational complexity of stochastic gradient methods, where the dependence
of the noise variance on the cost allocated to stochastic gradients is often omitted.
%However, in general, there exists a tradeoff between the quality of a stochastic gradient
%and its cost.

In this paper, we propose an alternative framework and consider the tradeoff between {\it fidelity}
and {\it cost} of computing a stochastic gradient.
In particular, we model a noisy gradient as an unbiased estimate of the true gradient,
where the noise variance depends on the incurred cost, and this dependence is formalized
through a {\it cost-fidelity} function.
We focus on mini-batch oracles, where each mini-batch oracle distributes
a limited budget across a mini-batch of stochastic gradients and aggregates them to
form a final gradient estimate.
We assume that the aggregation operation also incurs a cost from the budget, as does each of the noisy
gradients in the mini-batch.
The optimal mini-batch size in minimizing the noise variance depends on the underlying
cost-fidelity function.

We focus on determining the optimal mini-batch oracle in terms of the cost-fidelity tradeoff
when the cost-fidelity function is unknown.
In particular, we propose and analyze EE-Grad: an algorithm that, on each iteration, performs sequential trials over different mini-batch oracles to {\it explore} the performance
of each mini-batch oracle with high precision and {\it exploit} the current knowledge to focus on
the one that seems to provide the best performance, i.e, the smallest noise variance.
We demonstrate that the proposed algorithm performs almost as well as the optimal mini-batch oracle on each iteration in expectation.
We apply this result to the case of strongly convex objectives, and prove performance
guarantees in terms of the rate of convergence.
We finally provide a numerical example to illustrate our theoretical results.

\section{Cost-Fidelity Tradeoff and Mini-Batch Stochastic Gradient Oracles}
\label{sec:problemDescription}
Suppose that, on each iteration, a stochastic gradient $\gwt$
and the gradient $\DF \prl \bw \prr$ are related as
\begin{align}
	\gwt = \DFw  + \Uwt \label{eq:model}
\end{align}
where $\Uwt$ is a zero-mean perturbation with a positive definite
and diagonal covariance matrix $\theta^{-1} \mMw$ for $\theta>0$.
%and $\mMw = \diag\prl \DF \prl \bw \prr_1^2, \ldots, \DF\prl \bw \prr_d^2 \prr$.
That is,
\begin{align*}
	\Ew \sql \Uwt \sqr = 0, \;\;\;\;
	\Ew \sql \Uwt \Uwt^T \sqr = \theta^{-1} \mMw,
\end{align*}
where $\Ew \sql \cdot \sqr$ is the conditional expectation given $\bw$.
Here, $\theta$ is the {\it fidelity} of the stochastic gradient $\gwt$.
We assume that $i$th element of $\Uwt$ is sub-Gaussian
with the parameter $\theta^{-1} \mMw_{i,i}$, i.e.,
\begin{align}
	\Ew \sql e^{ \lambda \vUwti } \sqr 
	\leq e^ { \lambda^2 \mMw_{i,i} / 2\theta },
	\;\;\; \forall \lambda \in \mathbb{R},
	\label{eq:subGaussian}
\end{align}
for $i \in \sql d \sqr$\footnote{For any positive integer $N$, $\sql N \sqr \defi \crl 1,\ldots,N\crr$.}.
A mini-batch stochastic gradient is computed by averaging $n$ independent
noisy gradients 
$
	\giwt = \DFw + \Uiwt,
$
 $i\in \sql n \sqr$, each with fidelity $\theta$:
\begin{align}
	\Gw = \frac{1}{n} \sum_{i=1}^n \giwt, \label{eq:minibatch}
\end{align}
which has the covariance matrix $ \mMw /n\theta$,
and satisfies $\Ew \sql \LVert \DFw - \Gw \RVert _2^2 \sqr = \Sw/n\theta$,
where $\Sw = \Tr \prl \mMw \prr$ is the trace of the covariance matrix.

A stochastic gradient $\gwt$ with fidelity $\theta>0$ incurs a cost $C \prl \theta \prr$,
which is a strictly increasing function of $\theta$
with $\lim_{\theta\rightarrow 0} C \prl \theta  \prr = \cmin \geq 0$.
We assume that the cost function $C\prl\theta \prr$ is unknown.
There is also an aggregation cost $D \prl n \prr$ to perform the averaging operation,
where $D \prl n \prr$ is increasing with $D \prl 1\prr = 0$.
Hence, given a budget $B>0$, the maximum feasible mini-batch size is
\begin{equation*}
	N = \max \crl n \in \mathbb{Z}_+ \given B > n\cmin + D \prl n \prr \crr.
\end{equation*}
Here, we define, for each $n\in\sql N\sqr$, a mini-batch oracle $\MBOn$ that computes a
mini-batch stochastic gradient $\Gwn$ as in \eqref{eq:minibatch} using the 
fidelity
\begin{align*}
	\theta_n \defi C^{-1} \prl \frac{B - D \prl n \prr}{n} \prr.
\end{align*}
That is, each individual stochastic gradient in the mini-batch is allocated 
$\prl B - D \prl n \prr \prr / n$ in cost.
Therefore, the covariance matrix of $\Gwn$ is $\sigman \mMw$, where
\begin{align*}
	\sigman \defi \frac{1}{n\theta_n}
\end{align*}
is unknown, since the cost function $C\prl \theta \prr$ is assumed unknown.
Note that, given $\DF \prl \bw \prr$, the concentration of $\Gwn$ around $\DF \prl \bw \prr$ is completely governed by $\sigman$ for each $n\in\sql N\sqr$.
The optimal mini-batch size in terms of the cost-fidelity tradeoff is given by 
\begin{align*}
	\nOpt \defi \argmin_{n=1,\ldots,N} \sigman,
\end{align*}
and $\sigmaOpt \defi \sigma_{\nOpt}$.
In particular, we define the suboptimality gap of each mini-batch oracle $\MBOn$
\begin{equation*}
	\Delta_n \defi \sigman - \sigmaOpt \geq 0.
\end{equation*}

Since the cost function is unknown, the optimal mini-batch size $\nOpt$ and
$\sigmaOpt$, and hence the optimal mini-batch oracle $\MBOopt$, are unknown.
In the next section, we propose an algorithm that attempts to {\it learn}
the optimal mini-batch oracle over sequential trials in the sense that
its noise variance is almost as small as the optimal mini-batch oracle on each iteration.

\section{The EE-Grad Algorithm}
\label{sec:Algorithm}
In this section, we present EE-Grad: an algorithm that, on each iteration of the SGD, aggregates stochastic gradients computed over sequential trials, where at each trial it estimates the optimal mini-batch size and uses the available per-round budget to query the corresponding mini-batch oracle.
EE-Grad constructs a high confidence bound on the variance estimate
of each mini-batch oracle by exploiting the sub-Gaussian assumption on the noisy gradients.
We demonstrate that, in expectation, the algorithm performs almost as well as the optimal mini-batch oracle at each iteration.

On each SGD iteration, EE-Grad performs the following $T$-round procedure.
On round $t=1,\ldots,T$, it picks a mini-batch size $n_t \in \sql N\sqr$ based
on a strategy introduced later in this section, and uses the per-round budget $B$
to query the mini-batch oracle $\MBOnt$. The oracle returns $\Gtw=\Gtwnt$, an unbiased estimate of $\DFw$, with covariance matrix $\sigmant \mMw$. After $T$ rounds, the algorithm outputs the stochastic gradient
$
	\Gw = (1/T)\sum_{t=1}^T \Gtw.
$

We denote the number of rounds the algorithm picks $\MBOn$ up to round $t$
as $\gammatn$, index its outputs as $\GOnewn,\ldots,\Ggammawn$,
and write its sample mean and sample covariance matrix as
\begin{align*}
	\bmtn 
	= \frac{1}{\gammatn} \sum_{i=1}^{\gammatn} \Giwn, \;\;\;\;
	\bCtn
	= \frac{1}{\gammatn - 1} \sum_{i=1}^{\gammatn} \prl \Giwn - \bmtn \prr \prl \Giwn - \bmtn \prr^T,
\end{align*}
respectively, for $n \in \sql N \sqr$.
The algorithm computes the trace of the sample covariance matrix, denoted by
$\Vtn = \Tr\prl \bCtn \prr$ for each $n\in\sql N\sqr$.
Note that $\Ew \sql \Vtn \sqr = \sigman \Sw$, which implies that for each $\MBOn$,
the trace of its sample covariance matrix is an unbiased estimate of $\sigman \Sw$.

We emphasize that this framework is similar to the stochastic multi-armed bandit setup that involves an exploration/exploitation tradeoff when picking different arms over
sequential trials \cite{BubeckC2012}.
In particular, algorithms that exploit the available knowledge on the current best arm and explore the other arms to estimate the actual best arm with higher precision have been shown to yield satisfactory performance \cite{BubeckC2012,Auer2002}.
We adopt a similar approach here, and propose an algorithm that simultaneously performs
exploration and exploitation.
More precisely, EE-Grad first initializes by picking each mini-batch oracle exactly twice, so that
$\gammatn = 2$ for each $n\in \sql N\sqr$ at trial $t=2N$,
and then picks the mini-batch oracle at trial $t = 2N+1,\ldots,T$ according to
\begin{align}
	n_t \in \argmin\limits_{n=1,\ldots,N} \sql \Vtn - f \prl \frac{\alpha\ln\prl t\prr}{\gammatn-1} \prr \sqr,
	\label{eq:pickOracle}
\end{align}
for some $\alpha>2$, where
\begin{align}
	f \prl x \prr \defi \beta P \sqrt{\frac{xd}{c}} \max\prl 1, \sqrt{\frac{x}{cd}}\prr,
	\label{eq:funcf}
\end{align}
and $c>0$ is a universal constant that comes from the use of Hanson-Wright inequality, 
as detailed in the proof of Theorem~\ref{thm:main}.
Here we assume that $\beta$ and $P$ are known constants such that
$ \sigman \leq \beta$ for each $n\in\sql N\sqr$, and $\Sw \leq P$.
This algorithm constructs an upper confidence bound on the trace of the sample covariance matrix of each mini-batch oracle, and picks the one with the best estimate. The overall scheme, presented below as Algorithm~\ref{alg:alg1}, will be analyzed using techniques similar to the ones used in UCB strategies \cite{Auer2002, Robbins1952, AuerO2010}, as explained in the proof of Theorem~\ref{thm:main}.

\begin{algorithm}[H]
\caption{EE-Grad}
\label{alg:alg1}
\begin{algorithmic}[0]
\State {\bf Input:} $N,T>1$, per-round budget $B$.
\State {\bf Initialization:}  Pick each mini-batch oracle twice in the first $2N$ rounds.
\For{$t=2N+1:T$}
\State Compute $\Vtn$ for each $n\in\sql N \sqr$, and pick a mini-batch size $n_t$ based on \eqref{eq:pickOracle}.
\State Distribute the budget $B$ to $\MBOnt$, which reveals $\Gtwn$, and set $\Gtw = \Gtwn$.
\EndFor
\State Compute the final gradient estimate as
$
	\Gw = (1/T) \sum_{t=1}^T \Gtw,
$
\end{algorithmic}
\end{algorithm}

\section{EE-Grad Performance Guarantees}
\label{sec:mainResults}
In this section, we investigate the performance of EE-Grad.
In particular, we prove an upper bound on its noise variance,
and compare it to the noise variance achieved by the optimal mini-batch oracle:
\begin{theorem}
\label{thm:main}
On each iteration, the stochastic gradient computed by EE-Grad satisfies
\begin{align*}
	\Ew \sql \LVert \Gw - \DFw  \RVert_2^2 \sqr \leq  \ZTw \Sw,
\end{align*}
where
\begin{align*}
	\ZTw =  \frac{\sigmaOpt}{T} + \prl \frac{\ln T }{T^2} \prr  \Cone + \prl \frac{1}{T^2}\prr \Ctwo,
\end{align*}
and 
\begin{align*}
	\Cone
	\defi\sum_{n:\Delta_n>0} \frac{\alpha \Delta_n }{\phi \prl \Delta_n \Sw/2\prr},
	\;\;\;\;\;
	\Ctwo 
	\defi \prl \sum_{n=1}^{N} \Delta_n \prr \frac{2\prl \alpha-1\prr}{\alpha-2},
	\;\;\;\;\;
	\phi \prl \eps \prr 
	&\defi \frac{ c \eps}{\beta P}\min\prl 1,\frac{\eps/d}{\beta P}\prr.
\end{align*}
 Also, the stochastic gradient $\Gsw$ computed by the optimal mini-batch oracle satisfies
\begin{align*}
	\Ew \sql \LVert \Gsw - \DFw \RVert_2^2 \sqr= \frac{\sigmaOpt}{T} \Sw.
\end{align*}
 
\end{theorem}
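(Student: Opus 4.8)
The plan is to decompose the error of the aggregated gradient $\Gw = (1/T)\sum_{t=1}^T \Gtw$ by conditioning on $\bw$ and on the (random) sequence of mini-batch choices $n_1,\ldots,n_T$. Since each $\Gtw = \Gtwnt$ is conditionally unbiased with covariance $\sigmant\mMw$, and—crucially—distinct rounds are independent given the choice sequence, the cross terms vanish and we get
\begin{align*}
	\Ew\sql \LVert \Gw - \DFw\RVert_2^2\sqr = \frac{\Sw}{T^2}\, \Ew\sql \sum_{t=1}^T \sigmant \sqr = \frac{\Sw}{T^2}\, \Ew\sql \sum_{n=1}^N \gammaTn\, \sigman \sqr.
\end{align*}
Writing $\sigman = \sigmaOpt + \Delta_n$ and using $\sum_n \gammaTn = T$, this becomes $\frac{\Sw}{T}\sigmaOpt + \frac{\Sw}{T^2}\sum_{n:\Delta_n>0}\Delta_n\,\Ew[\gammaTn]$. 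So the whole theorem reduces to bounding the expected number of pulls $\Ew[\gammaTn]$ of each suboptimal oracle by $\frac{\alpha\ln T}{\phi(\Delta_n\Sw/2)} + \frac{2(\alpha-1)}{\alpha-2}$, which is exactly the classical UCB pull-count bound.

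For that bound I would follow the standard UCB argument adapted to this estimator. The confidence radius is $f\!\left(\frac{\alpha\ln t}{\gammatn-1}\right)$; the key probabilistic input is that $\Vtn$, the trace of the sample covariance of $\gammatn$ i.i.d.\ sub-Gaussian vectors, concentrates around its mean $\sigman\Sw$. Here I would invoke the Hanson–Wright inequality (this is where the universal constant $c$ and the form of $f$ in \eqref{eq:funcf} come from): conditioned on $\gammatn = m$, $\Pr\{\,|\Vtn - \sigman\Sw| > f(\alpha\ln t/(m-1))\,\} \le 2 t^{-\alpha}$, using $\sigman\le\beta$ and $\Sw\le P$ to replace unknown quantities by the known constants $\beta,P$. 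Then a bad pull of suboptimal $n$ at round $t$ requires that either the optimal arm's index exceeds its true value, or arm $n$'s index is below $\sigman\Sw$, or arm $n$ has been pulled fewer than roughly $\ell_n := \frac{\alpha\ln T}{\phi(\Delta_n\Sw/2)}$ times; the first two events each have probability $\le 2t^{-\alpha}$ by Hanson–Wright, and summing $\sum_{t\le T} 4 t^{-\alpha}$ via $\sum_{t\ge1} t^{-\alpha} \le \frac{\alpha-1}{\alpha-2}$ (valid for $\alpha>2$) gives the constant $\frac{2(\alpha-1)}{\alpha-2}$. The definition of $\ell_n$ is engineered so that $2f(\alpha\ln t/(\ell_n-1)) \le \Delta_n\Sw$, which is precisely the statement that $\phi(\Delta_n\Sw/2)$ inverts $f$; checking this inversion—keeping track of the $\max(1,\cdot)$ branch in $f$ and the corresponding $\min(1,\cdot)$ branch in $\phi$—is the one genuinely fiddly computation.

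The second, easier claim is immediate: the optimal oracle always uses mini-batch size $\nOpt$, so $\Gsw = (1/T)\sum_{t=1}^T \Gtw(\bw,\nOpt)$ is an average of $T$ independent conditionally-unbiased estimators each with covariance $\sigmaOpt\mMw$, giving $\Ew\LVert \Gsw - \DFw\RVert_2^2 = \Tr(\sigmaOpt\mMw/T) = \sigmaOpt\Sw/T$.

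I expect the main obstacle to be the careful application of Hanson–Wright to $\Vtn$: one must express $\Vtn - \Ew[\Vtn]$ as a quadratic form in the stacked sub-Gaussian noise vector (the sample-covariance trace, after centering by the sample mean rather than the true mean, is a quadratic form with a projection-type matrix), compute its operator and Frobenius norms to feed into the two-regime tail bound, and then algebraically match the resulting tail to the two-regime form of $f$ so that the inversion by $\phi$ goes through cleanly. Everything downstream of that concentration statement is the textbook UCB bookkeeping.
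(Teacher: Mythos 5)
Your proposal is correct and follows essentially the same route as the paper: the same reduction of the excess error to the pseudo-regret $\frac{\Sw}{T^2}\sum_{n}\Delta_n\Ew[\gammaTn]$, the same Hanson--Wright concentration applied to $\Vtn$ written as a quadratic form in the stacked centered sub-Gaussian vectors (with $\|\Atn\|_{\mathrm F}^2 = d/(\gammatn-1)$ and $\|\Atn\| = (\gammatn-1)^{-1}$ feeding the two-regime tail that $\phi = f^{-1}$ inverts), and the same three-event UCB bookkeeping yielding $\Ew[\gammaTn]\le \lceil\alpha\ln T/\phi(\Delta_n\Sw/2)\rceil + 2(\alpha-1)/(\alpha-2)$. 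The only cosmetic difference is that you bound the EE-Grad error directly while the paper bounds the difference from the optimal oracle's error first; these are the same computation.
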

\begin{proof}
We prove this theorem in several steps.
We first analyze the difference between the noise variance of the stochastic gradient generated by EE-Grad and
that of the optimal mini-batch oracle.
We next show that this quantity is related to the pseudo-regret term that appears in stochastic multi-armed bandit problems, where UCB-type strategies are used to achieve upper bounds on
the pseudo-regret by leveraging concentration inequalities.
We present a similar formulation to analyze the behavior of the proposed algorithm with respect
to the optimal mini-batch oracle.
To prove the upper bound, we first demonstrate that the trace of the sample covariance matrix for each mini-batch oracle, which is used to pick a oracle on each trial in \eqref{eq:pickOracle},
can be written as a quadratic form of independent sub-Gaussian random variables. We combine this observation with the Hanson-Wright inequality \cite{HansonW1971} to prove a high probability tail bound on the estimate of the optimal mini-batch size.
This result also is the derivation of the rule in \eqref{eq:pickOracle}.
Based on these results, we prove a pseudo-regret bound and
connect this bound to the noise variance achieved by EE-Grad.

Note that, on each iteration, the stochastic gradient of the optimal mini-batch oracle
after $T$ rounds is
\begin{align*}
	\Gsw \defi \frac{1}{T} \sum_{t=1}^T \GtwOpt,
\end{align*}
where $\GOnewOpt,\ldots,\GTwOpt$ are independent.
We observe that
\begin{align}
	&\Ew \sql \LVert \Gw- \DFw \RVert_2^2 \sqr - \Ew \sql \LVert \Gsw - \DFw \RVert_2^2 \sqr \nn\\
	&= \frac{1}{T^2} \prl \sum_{t=1}^T \Ew \sql \LVert \Gtw - \DFw \RVert_2^2 \sqr
	- \sum_{t=1}^T \Ew \sql \LVert \GtwOpt - \DFw \RVert_2^2 \sqr \prr \nn\\
	& =  \frac{1}{T^2} \prl  \sum_{t=1}^T \Ew \sql \LVert \Gtw - \DFw \RVert_2^2 \sqr 
	- T\sigmaOpt \Sw \prr, \label{eq:pluginOptimal}
\end{align}
where in \eqref{eq:pluginOptimal} we used
$\Ew \sql \LVert \GtwOpt - \DFw \RVert_2^2 \sqr = \sigmaOpt \Sw$ for each $t \in \sql T \sqr$.
We next observe that
\begin{align}
	 \Ew \sql \LVert \Gtw- \DFw \RVert_2^2 \sqr
	 &=  \Ew \sql \Ew \sql \LVert \Gtw - \DFw \RVert_2^2 \given n_t \sqr \sqr\nn\\
	 &=  \Ew \sql \sigmant \sqr \Sw,  \label{eq:nestedExpectation}
\end{align}
where in \eqref{eq:nestedExpectation} the expectation is with respect to the randomness in $n_t$.
In particular, we can write
\begin{align}
	\Ew \sql \sigmant \sqr  = \sum\limits_{n=1}^{N} \sigman \Pr\prl n_t = n \prr 
	\label{eq:sigmaExpectation}
\end{align}
for each $t \in \sql T \sqr$.
If we substitute \eqref{eq:sigmaExpectation} into \eqref{eq:nestedExpectation} and use the result in
\eqref{eq:pluginOptimal}, then we obtain
\begin{align}
	&\Ew \sql \LVert \Gw- \DFw \RVert_2^2 \sqr - \Ew \sql \LVert \Gsw - \DFw \RVert_2^2 \sqr\nn\\
	& = \frac{1}{T^2} \prl  \sum_{n=1}^N \sigman \sum_{t=1}^T  \Pr\prl n_t = n \prr 
	- T\sigmaOpt \prr \Sw, \nn\\
	& =  \frac{1}{T^2} \prl  \sum_{n=1}^N \sigman \sum_{t=1}^T \Ew \sql \1\crl n_t = n \crr \sqr
	- T\sigmaOpt\prr \Sw \label{eq:indicatorProbability} \\
	& = \frac{1}{T^2} \prl  \sum_{n=1}^N \sigman \Ew \sql \gammaTn \sqr
	- \sigmaOpt  \sum_{n=1}^N\Ew \sql \gammaTn \sqr \prr \Sw \label{eq:gammaSum} \\
	& = \frac{1}{T^2} \Ew \sql \sum_{n=1}^N \Delta_n \gammaTn \sqr \Sw, \label{eq:gammaDelta}
\end{align}
where in \eqref{eq:indicatorProbability} we used $\Pr\prl n_t = n \prr = \Ew \sql \1\crl n_t = n \crr \sqr$,
in \eqref{eq:gammaSum} we used $\gammaTn = \sum_{t=1}^{T} \1\crl n_t = n \crr$ 
and $\sum_{n=1}^{N} \gammaTn = T$,
and in \eqref{eq:gammaDelta} we used $\Delta_n = \sigman - \sigmaOpt$.
We note that the term $\Ew \sql \sum_{n=1}^N \Delta_n \gammaTn \sqr \Sw $ is similar to
the pseudo-regret term that appears in stochastic multi-armed bandit problems, where
there are $N$ arms with unknown reward distributions \cite{BubeckC2012}.
We derive the strategy in \eqref{eq:pickOracle} based on similar arguments,
where we leverage a novel application of the Hanson-Wright inequality to
the trace of the sample covariance matrix of each mini-batch oracle to prove concentration inequalities.

To prove an upper bound on \eqref{eq:gammaDelta}, we first show in Lemma~\ref{lemma:quadraticForm} that $\Vtn$ can be written as a quadratic form of sub-Gaussian random variables as
\begin{align*}
	\Vtn = \stn^{T} \Atn \stn, \;\; n\in \sql N \sqr,
\end{align*}
where
$
	\stn \defi \prl \GOnewn^T,\ldots,\Ggammawn^T\prr^T,
$
and
\begin{equation*}
	\Atn = \frac{1}{\gammatn-1} \prl \mI - \frac{1}{\gammatn} \mE \prr,
\end{equation*}
$\mI \in \mathbb{R}^{d \gammatn \times d\gammatn}$ is an identity matrix, and
$\mE \in \mathbb{R}^{d \gammatn \times d\gammatn}$ is a block matrix with $d\times d$ identity blocks.
We next apply the Hanson-Wright inequality \cite{HansonW1971,RudelsonV2013} to $\Vtn$ for each $n\in\sql N\sqr$ to obtain high confidence bounds.
This inequality provides a tail probability bound for an arbitrary quadratic function of independent sub-Gaussian random variables.
We present it in the appendix for completeness.
Moreover, Lemma~\ref{lemma:main}
shows that the tail probability of the trace of the sample covariance matrix of each mini-batch oracle satisfies, for any $\eps>0$,
\begin{align}
	\Pr \prl \Vtn - \sigman \Sw > \eps \prr 
	\leq \exp \prl {- \prl \gammatn -1 \prr \phi \prl \eps \prr} \prr, \label{eq:concentration}
\end{align}
where
\begin{align*}
	\phi \prl \eps \prr 
	&\defi \frac{ c \eps}{\beta P}\min\prl 1,\frac{\eps/d}{\beta P}\prr,
\end{align*}
for each $n\in\sql N\sqr$.
We observe that $\phi = f^{-1}$, where $f$ is defined in \eqref{eq:funcf}.

Note that \eqref{eq:concentration} is equivalent to stating that, for any $\delta \in (0,1)$, 
\begin{align}
	\Vtn - f \prl \frac{1}{\gammatn-1} \ln\prl \frac{1}{\delta}\prr\prr \leq \sigman \Sw
	\label{eq:epsilonDelta}
\end{align}
with probability
at least $1-\delta$. Using this result, we propose the UCB-type strategy in \eqref{eq:pickOracle} to pick the mini-batch oracle on round $t$.
In particular, we show in Lemma~\ref{lemma:regret} 
that, for any $\alpha>2$, we have
\begin{align}
	\Ew \sql \sum_{n=1}^N \Delta_n \gammaTn \sqr \Sw
	 \leq \prl \Cone \ln\prl T \prr + \Ctwo \prr \Sw,
	 \label{eq:UB}
\end{align}
where
\begin{align}
	\Cone 
	\defi\sum_{n:\Delta_n>0} \frac{\alpha \Delta_n }{\phi \prl \Delta_n\Sw/2\prr},
	\;\;\;
	\Ctwo
	\defi \prl \sum_{n=1}^{N} \Delta_n \prr \frac{2\prl \alpha-1\prr}{\alpha-2}.\nn
\end{align}

Finally, if we use \eqref{eq:UB} in \eqref{eq:gammaDelta}, then we obtain
\begin{align}
	\Ew \sql \LVert \Gw- \DFw \RVert_2^2 \sqr - \Ew \sql \LVert \Gsw - \DFw \RVert_2^2 \sqr
	\leq \frac{1}{T^2} \prl \Cone \ln\prl T \prr + \Ctwo \prr \Sw,
	\label{eq:finalUpperBound}
\end{align}
where substituting $\Ew \sql \LVert \Gsw - \DFw \RVert_2^2 \sqr = \sigmaOpt \Sw/T$
in \eqref{eq:finalUpperBound} yields the desired result.
\end{proof}

\section{SGD Performance Under Strongly Convex Objectives}
In this section, we investigate the performance of EE-Grad with strongly convex objective functions with Lipschitz continuous gradients.
That is, we assume that the gradient $\DF$ is Lipschitz continuous with Lipschitz constant $L>0$, i.e.,
\begin{equation*}
	\LVert \DF \prl \bw \prr - \DF \prl \overline{\bw} \prr \RVert_2 
	\leq L \LVert \bw - \overline{\bw} \RVert_2, \;\;\; \forall \bw,\overline{\bw}\in\mathbb{R}^d,
\end{equation*}
and there exists $m>0$ such that
\begin{align*}
	F\prl \overline{\bw} \prr \geq F\prl \bw \prr + \DF\prl \bw \prr^T\prl \overline{\bw} - \bw \prr
+ \frac{1}{2}m \LVert \overline{\bw} - \bw \RVert_2^2,\;\;\;
	\forall\overline{\bw},\bw\in\mathbb{R}^d.
\end{align*}
Let $\bw_*  = \argmin_{\bw \in \mathbb{R}^d} F\prl \bw\prr$ be the global minimizer.
We first describe the {\it optimal} mini-batch SGD algorithm that uses the optimal mini-batch oracle on each iteration.
We next compare its performance to EE-Grad in terms of the rate of convergence to the global solution $\bw_*$.
Note that the cost function $C\prl \theta\prr$, and hence the optimal mini-batch size, is allowed to vary across iterations of the SGD algorithm.
We use the subscript $k$, which denotes the SGD iteration, for the quantities introduced in Section~\ref{sec:problemDescription} and Section~\ref{sec:Algorithm} to emphasize the iteration dependence whenever necessary.

On each iteration $k=1,\ldots,K$, the optimal mini-batch SGD algorithm that knows the optimal mini-batch oracle $\MBOoptk$ distributes the per-round budget $B_k$ to it producing $\Gtok = \GtoknOpt$ on each trial $t=1,\ldots,T$.
After $T$ trials, it computes its final stochastic gradient as
$
	\Gok = (1/T) \sum_{t=1}^T \Gtok,
$
and sets the next iterate as
$
	\bwo_{k+1} = \bwo_k - \eta \Gok.
$
We observe that $\bw_k$ and $\bwo_k$ may be different over iterations, so the true
gradients $\DFak$ and $\DFok$ also may differ.
Also, note that $\Gok$ satisfies
\begin{align*}
	\Ew \sql \LVert \Gok - \DFok \RVert_2^2 \sqr = \frac{\sigmaOpt \Swo}{T} ,
\end{align*}
where $\Swo \defi \Tr \prl \mMwkOpt \prr$ for each $k\in\sql K \sqr$.
In this section, we focus on the case where
\begin{align*}
	\mMw \defi \diag \prl \DFw_1^2,\ldots,\DFw_d^2\prr
\end{align*}
for any $\bw \in \mathbb{R}^d$, which implies that $\Sw = \LVert \DFw \RVert_2^2$.

We define the expected gaps of EE-Grad and of the optimal mini-batch SGD algorithm
with respect to the global minimizer $\bw_*$ on each iteration $k$ as
\begin{align}
	\JAk  &\defi  \E\sql F\prl \bw_k \prr\sqr - \Fstar,\;\;\;\;
	\JOk \defi  \E \sql F\prl \bwo_k \prr\sqr - \Fstar, \label{eq:GapDef}
\end{align}
respectively.
The next theorem shows how these expected gaps evolve over iterations.

\begin{theorem}
\label{thm:SGDapplication}
Suppose that the step size $\eta_k$ is sufficiently small so that it satisfies
\begin{align}
	0 < \eta_k < \frac{2}{L \prl 1 + \ZTwk \prr}.
	\label{eq:assumptionEta}
\end{align}
Then, on each iteration $k$, the expected gap of the optimal mini-batch SGD algorithm satisfies
\begin{align*}
	\JOkp &\leq \taoOpt \JOk,
\end{align*}
where
\begin{align*}
	0 < \taoOpt &\defi mL\eta_k^2 \prl 1 + \sigmaOptk /T \prr - 2m\eta_k +1 < 1.
\end{align*}
Moreover, the expected gap of the EE-Grad Algorithm on iteration $k$ satisfies
\begin{align*}
	\JAkp \leq \taoAlg \JAk,
\end{align*}
where
\begin{align*}
	0 < \taoAlg \defi \taoOpt + mL\eta_k^2 \OTk < 1,
\end{align*}
and $\OTk \defi \ZTwk - \sigmaOptk/T = \Conek \ln T /T^2  + \Ctwok /T^2 > 0$, where
$\OTk\rightarrow 0$ as $T\rightarrow\infty$.
\end{theorem}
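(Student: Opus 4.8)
The plan is to run the classical one-step contraction analysis of mini-batch SGD under $L$-smoothness and $m$-strong convexity; the only problem-specific inputs are the second-moment bounds already established. Recall that under the assumed form $\mMw = \diag(\DFw_1^2,\ldots,\DFw_d^2)$ we have $\Sw = \|\DFw\|_2^2$, so Theorem~\ref{thm:main} gives $\Ew[\|\Gwk - \DFak\|_2^2] \le \ZTwk\,\|\DFak\|_2^2$, while the optimal oracle satisfies the exact identity $\Ew[\|\Gok - \DFok\|_2^2] = (\sigmaOptk/T)\,\|\DFok\|_2^2$. I will carry out the argument for EE-Grad; the optimal case is identical with $\Gwk,\bw_k,\ZTwk$ replaced throughout by $\Gok,\bwo_k,\sigmaOptk/T$.

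First I would start from the descent inequality implied by Lipschitz continuity of $\DF$, applied to the update $\bw_{k+1} = \bw_k - \eta_k\Gwk$:
\[
F(\bw_{k+1}) \le F(\bw_k) - \eta_k\,\DFak^{T}\Gwk + \frac{L\eta_k^2}{2}\|\Gwk\|_2^2 .
\]
Taking $\Ew[\cdot]$ conditionally on $\bw_k$ and using that $\Gwk$ is an unbiased estimate of $\DFak$ (so $\Ew[\DFak^{T}\Gwk] = \|\DFak\|_2^2$ and $\Ew[\|\Gwk\|_2^2] = \|\DFak\|_2^2 + \Ew[\|\Gwk-\DFak\|_2^2] \le (1+\ZTwk)\|\DFak\|_2^2$) yields
\[
\Ew[F(\bw_{k+1})] \le F(\bw_k) - \eta_k\Bigl(1 - \tfrac{L\eta_k}{2}(1+\ZTwk)\Bigr)\|\DFak\|_2^2 ,
\]
where the parenthesized factor is strictly positive exactly by assumption~\eqref{eq:assumptionEta}. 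Next I would invoke the Polyak--{\L}ojasiewicz-type bound $\|\DFw\|_2^2 \ge 2m(F(\bw)-\Fstar)$, obtained by minimizing the strong-convexity lower bound $F(\overline{\bw}) \ge F(\bw) + \DFw^{T}(\overline{\bw}-\bw) + \tfrac m2\|\overline{\bw}-\bw\|_2^2$ over $\overline{\bw}$ and evaluating at $\overline{\bw}=\bw_*$. Substituting this, subtracting $\Fstar$, and collecting terms gives
\[
\Ew[F(\bw_{k+1})] - \Fstar \le \bigl(1 - 2m\eta_k + mL\eta_k^2(1+\ZTwk)\bigr)\bigl(F(\bw_k)-\Fstar\bigr),
\]
and passing to the outer expectation over $\bw_k$ produces $\JAkp \le \taoAlg\,\JAk$ once one observes the algebraic identity $1 - 2m\eta_k + mL\eta_k^2(1+\ZTwk) = \taoOpt + mL\eta_k^2\,\OTk = \taoAlg$, immediate from $\OTk = \ZTwk - \sigmaOptk/T$. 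The same computation with $\Gok$ in place of $\Gwk$ (and the exact variance $\sigmaOptk/T$ in place of the bound $\ZTwk$) delivers $\JOkp \le \taoOpt\,\JOk$.

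Finally I would verify $0 < \taoAlg < 1$, and likewise for $\taoOpt$. Viewing $\taoAlg = mL\eta_k^2(1+\ZTwk) - 2m\eta_k + 1$ as an upward quadratic in $\eta_k$: a function that is simultaneously $m$-strongly convex and $L$-smooth has $m \le L$, and since $1+\ZTwk > 1$ this gives $L(1+\ZTwk) > m$, hence $\taoAlg > (1-m\eta_k)^2 \ge 0$; and $\taoAlg < 1 \iff \eta_k < 2/\bigl(L(1+\ZTwk)\bigr)$, which is again~\eqref{eq:assumptionEta}. For $\taoOpt$ the positivity is the same, and $\taoOpt < 1$ follows because $\OTk \ge 0$ forces $\eta_k < 2/\bigl(L(1+\ZTwk)\bigr) \le 2/\bigl(L(1+\sigmaOptk/T)\bigr)$. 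The decay claim is then immediate from $\OTk = \Conek\ln T/T^2 + \Ctwok/T^2$ with $\Conek,\Ctwok$ constant in $T$ and $\ln T/T^2,\,1/T^2 \to 0$, while $\OTk\ge 0$ comes from $\Delta_n \ge 0$. The one point requiring care is that $\ZTwk$ — and therefore $\taoAlg$ and the admissible step-size window~\eqref{eq:assumptionEta} — depends on the random iterate $\bw_k$ through $\|\DFak\|_2^2$; I would handle this exactly as the theorem statement implicitly does, reading $\ZTwk,\Conek,\Ctwok,\taoAlg$ as the quantities at iteration $k$ and applying the one-step bound conditionally before taking expectations. Everything else is routine algebra.
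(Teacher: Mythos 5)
Your proof is correct and follows essentially the same route as the paper's: the $L$-smoothness descent inequality, conditional expectation using unbiasedness and the second-moment bounds $\ZTwk\Swk$ (resp.\ $\sigmaOptk\Swo/T$), the strong-convexity bound $\LVert \DFw\RVert_2^2 \ge 2m\prl F(\bw)-\Fstar\prr$, and the identity $\taoAlg = \taoOpt + mL\eta_k^2\OTk$. Your explicit verification that $0<\tau<1$ (via $m\le L$ and the quadratic-in-$\eta_k$ argument) is a welcome addition that the paper only asserts in the discussion following the theorem.
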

\begin{proof}
First note that since $\DF$ is Lipschitz continuous with Lipschitz constant $L>0$, it
satisfies \cite{BottouCN2016}
\begin{align}
	F\prl \bw \prr \leq F\prl \overline{\bw}\prr + \DF\prl \overline{\bw} \prr ^T \prl \bw - \overline{\bw}\prr 
			     + \frac{1}{2}L \LVert \bw - \overline{\bw} \RVert_2^2,
			     \;\;\;\ \forall \bw,\overline{\bw} \in \mathbb{R}^d,
			     \nn
\end{align}
which implies that on each iteration $k$, we have
\begin{align}
	F\prl \bw_{k+1} \prr - F\prl \bw_k \prr 
	\leq -\eta_k \DF\prl \bw_k \prr ^T \Gwk  + \frac{1}{2}L\eta_k^2 \LVert \Gwk \RVert_2^2.
\end{align}
By taking conditional expectations of both sides and rearranging the terms, we obtain
\begin{align}
	\Ek \sql F\prl \bw_{k+1} \prr\sqr - F\prl \bw_k \prr
	\leq -\eta_k \Swk \prl 1 - \frac{1}{2} \eta_k L\prl 1 + \ZTwk \prr\prr.
	\label{eq:iterationAlg}
\end{align}
Performing the same steps on the optimal mini-batch SGD algorithm yields
\begin{align}
	\Ek \sql F\prl \bwo_{k+1} \prr\sqr - F\prl \bwo_k \prr
	\leq -\eta_k \Swo \prl 1 - \frac{1}{2} \eta_k L\prl 1 + \sigmaOptk/T \prr\prr .
	\label{eq:iterationOpt}
\end{align}

Since $F$ is assumed to be $m$-strongly convex, the optimality gap for any $\bw \in \mathbb{R}^d$ satisfies \cite{BottouCN2016}
\begin{align}
	F\prl \bw \prr - F\prl \bw_*\prr  \leq \frac{1}{2m}\LVert \DF\prl \bw\prr \RVert_2^2.
	\label{eq:SCineq}
\end{align}
The assumption in \eqref{eq:assumptionEta} guarantees that
$1 - \eta_k L\prl 1 + \ZTwk \prr/2 > 0$.
Thus, using \eqref{eq:SCineq} in \eqref{eq:iterationAlg}, subtracting $F\prl \bw_*\prr$ on both sides, and rearranging terms give
\begin{align}
	\Ek \sql F\prl \bw_{k+1} \prr\sqr - F\prl \bw_*\prr
	&\leq F\prl \bw_k \prr - F\prl \bw_*\prr
	-\eta_k \Swk \prl 1 - \frac{1}{2} \eta_k L\prl 1 + \ZTwk \prr\prr \nn\\
	&\leq F\prl \bw_k \prr - F\prl \bw_*\prr
	- 2m \eta_k \prl F\prl \bw_k \prr - F\prl \bw_*\prr \prr \prl 1 - \frac{1}{2} \eta_k L\prl 1 + \ZTwk \prr\prr \nn\\
	& = \prl mL\eta_k^2\prl 1 + \ZTwk \prr - 2m\eta +1 \prr \prl F\prl \bw_k \prr - F\prl \bw_*\prr \prr,\nn \\
	& = \taoAlg  \prl F\prl \bw_k \prr - F\prl \bw_*\prr \prr.\nn
\end{align}
Here if we take expectations of both sides and note the definition in \eqref{eq:GapDef}, then we obtain
$
	\JAkp
	\leq \taoAlg \JAk.
$
Similar steps for the optimal mini-batch SGD algorithm imply
$
	\JOkp
	\leq \taoOpt \JOk,
$
where $\taoAlg = \taoOpt + mL\eta_k^2\OTk $, so that $\taoAlg - \taoOpt \rightarrow 0$
as $T\rightarrow\infty$.
\end{proof}

Here, we note that $\taoAlg$ is a quadratic function of $\eta_k$, minimized at $\eta_k = 1/(1+\ZTwk)$,
and $\taoAlg < 1$ for all $\eta_k$ satisfying \eqref{eq:assumptionEta}.
Similarly, $\taoOpt$ is a quadratic function of $\eta_k$, minimized at $\eta_k = 1/(1+\sigmaOptk/T)$,
and $\taoOpt < 1$ for all $\eta_k$ satisfying \eqref{eq:assumptionEta}.
Also, we observe that
\begin{align*}
	\taoAlg = \taoOpt + mL\eta_k^2\OTk > \taoOpt
\end{align*} 
for all $\eta_k>0$, i.e., $\taoAlg$ is uniformly larger than $\taoOpt$, which implies that
the optimal mini-batch SGD algorithm enjoys faster convergence rate than the proposed algorithm.
However, the gap between them is proportional to $\OTk$ for any given step size $\eta_k>0$,
which is the gap between EE-Grad and the optimal mini-batch SGD algorithm, as shown in Theorem~\ref{thm:main}.
Finally, we note that this gap diminishes as the number of trials $T$ increases, at the expense of
larger total incurred cost.
In the next section, we illustrate our theoretical results with numerical examples.

\section{Numerical Results}
In this section, we present a numerical example based on synthetic data to illustrate
our main results.
We consider $d=2$ dimensional case, where the objective function and its gradient are
$F\prl \bw \prr = \bw^T \bw/2$ and $\DFw = \bw$, respectively, where $\Fstar = 0$ with
$\bw_* = \prl 0, 0 \prr^T$.

We assume that $\mMw = \diag \prl w_1^2, w_2^2\prr$, and each stochastic gradient
$\gwt$ with fidelity $\theta>0$ has uncorrelated Gaussian components with the parameters
$w_1^2/\theta$ and $w_2^2/\theta$, respectively.
We next assume that the unknown parameters of the mini-batch oracles are given by
$\sigma_1^2 = 50, \sigma_2^2 = 26, \sigma_3^2 = 16.7$, and run the EE-Grad algorithm
and the mini-batch oracles with a randomly generated initial iterate for $T=50$ trials and $K=5$ iterations by using the constant step size $\eta=0.85$, where we obtain expected results over $2000$ independent realizations.
We plot the resulting expected gaps achieved by EE-Grad  and the mini-batch oracles
in Fig.~\ref{fig:T50}.
We repeat the same procedure for $T=200$ and $T=3000$ and plot the results in Fig.~\ref{fig:T200}
and Fig.~\ref{fig:T3000}, where we note that $\sigma_i^2$ are scaled accordingly, so that the results over different $T$s are comparable.

We observe that for this numerical example, the expected gap achieved by the EE-Grad algorithm
is close to that of the optimal mini-batch oracle, where the performance difference between them shrinks with increasing $T$ at the expense of increased total cost, as we proved in Theorem~\ref{thm:SGDapplication}.

\begin{figure}
    \centering
    \begin{subfigure}[b]{0.32\textwidth}
        \includegraphics[width=\textwidth]{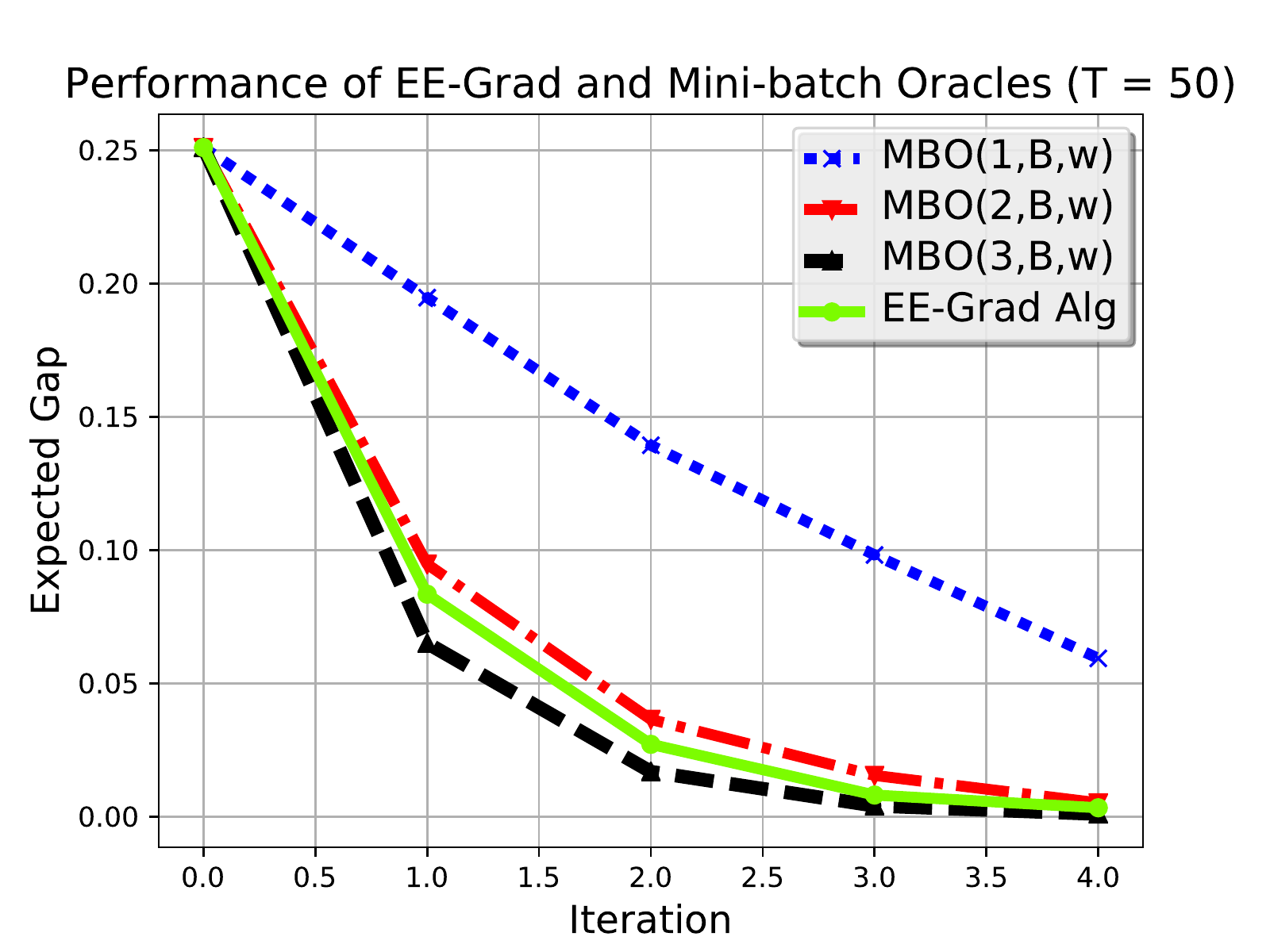}
        \caption{$T=50$.}
        \label{fig:T50}
    \end{subfigure}
    \begin{subfigure}[b]{0.32\textwidth}
        \includegraphics[width=\textwidth]{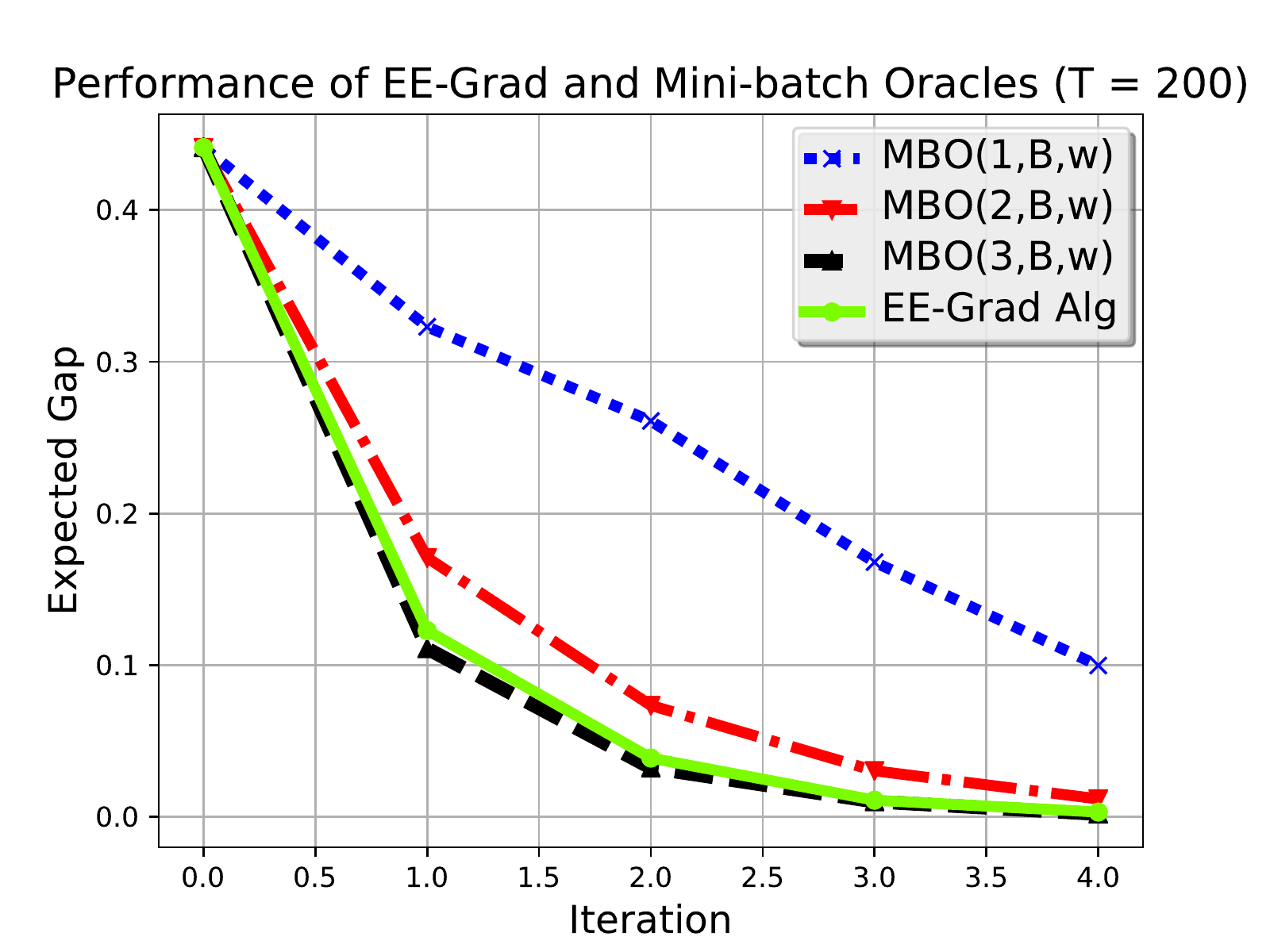}
        \caption{$T=200$.}
        \label{fig:T200}
    \end{subfigure}
    \begin{subfigure}[b]{0.32\textwidth}
        \includegraphics[width=\textwidth]{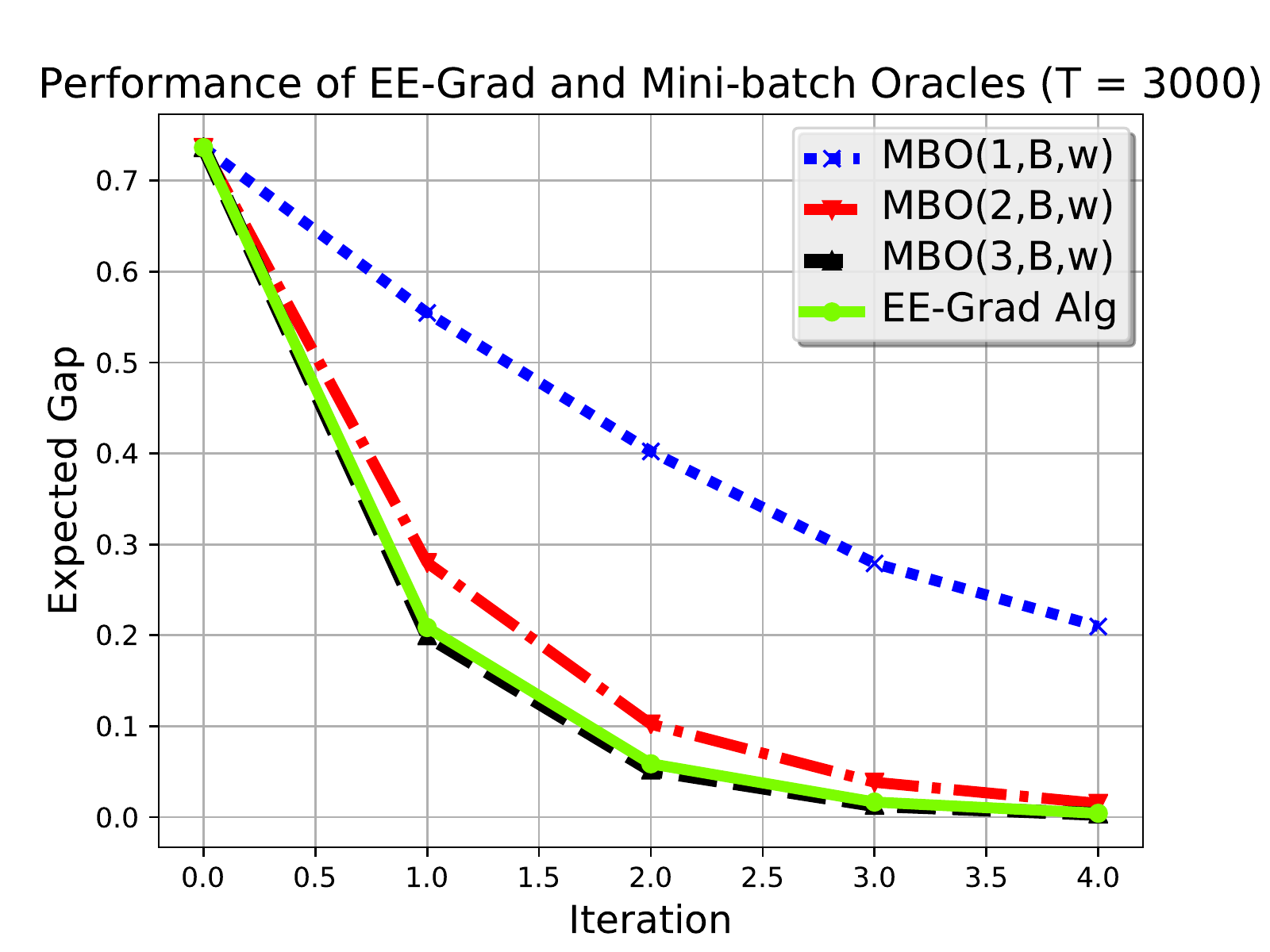}
        \caption{$T=3000$.}
        \label{fig:T3000}
    \end{subfigure}
    \caption{Expected gaps achieved by the EE-Grad Algorithm and the mini-batch oracles for different values of $T$ over $K=5$ iterations.}\label{fig:experiments}
\end{figure}

\section{Discussion}
We presented a new framework to analyze the tradeoff between {\it fidelity} and {\it cost} of computing a stochastic gradient,
where we modeled a noisy gradient as an unbiased estimate of the true gradient such that the noise variance depends on the cost incurred to compute it.
We investigated mini-batch oracles that distribute a limited budget to a mini-batch of
stochastic gradients and averages them to estimate the true gradient, where the averaging operation is also assumed to be costly.
In this framework, the optimal mini-batch size in minimizing the noise variance depends on the underlying cost-fidelity function, which is assumed to be unknown.

We proposed the EE-Grad algorithm that performs sequential trials over different mini-batch oracles to explore
the performance of each mini-batch oracle with high precision and exploit the current knowledge to allocate the budget to the one that seems to provide the best performance.
We demonstrated that the proposed algorithm performs almost as well as the optimal mini-batch oracle on each iteration in expectation.
We next applied this result to the strongly convex objectives with Lipschitz continuous gradients,
and provided a performance guarantee on the rate of convergence with respect to the optimal
mini-batch oracle.
We finally illustrated our theoretical results through numerical experiments on synthetic data.

\newpage 
\bibliographystyle{unsrt} 
\bibliography{abrv,conf_abrv,mad_lib}

\newpage
\appendix
\numberwithin{equation}{section}

%%%%%%
\section{Trace of the Sample Covariance Matrix as a Quadratic Form}
\label{app:quadraticForm}
\begin{lemma} 
\label{lemma:quadraticForm}
On each round $t$, the trace of the sample covariance matrix $ \bCtn$ can be written as
\begin{align*}
	\Vtn = \stn^{T} \Atn \stn, \;\; n=1,\ldots,N,
\end{align*}
where
$
	\stn = \prl \GOnewn^T,\ldots,\Ggammawn^T\prr^T
$,
$
	\Atn =\prl \gammatn-1 \prr^{-1}\prl \mI - \gammatn^{-1} \mE \prr,
$
$\mI\in \mathbb{R}^{d \gammatn \times d\gammatn}$ is an identity matrix, and
$\mE \in \mathbb{R}^{d \gammatn \times d\gammatn}$ is a block matrix with $d\times d$ identity blocks.
\end{lemma}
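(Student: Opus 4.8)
The plan is to reduce the trace of the sample covariance matrix to a sum of squared norms and then recognize that sum as a quadratic form in the stacked vector $\stn$. First I would use linearity of the trace together with $\Tr\prl xx^T\prr = \LVert x\RVert_2^2$ to write
\[
	\Vtn = \Tr\prl \bCtn \prr
	= \frac{1}{\gammatn-1} \sum_{i=1}^{\gammatn} \Tr\prl \prl \Giwn - \bmtn \prr \prl \Giwn - \bmtn \prr^T \prr
	= \frac{1}{\gammatn-1} \sum_{i=1}^{\gammatn} \LVert \Giwn - \bmtn \RVert_2^2 .
\]
Then I would invoke the elementary identity $\sum_{i=1}^{\gammatn} \LVert \Giwn - \bmtn \RVert_2^2 = \sum_{i=1}^{\gammatn} \LVert \Giwn \RVert_2^2 - \gammatn \LVert \bmtn \RVert_2^2$, which follows by expanding the square and using $\sum_{i=1}^{\gammatn} \Giwn = \gammatn\, \bmtn$. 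This splits $\Vtn$ into two pieces, which I will match to $\mI$ and $\mE$ respectively.

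For the first piece, the definition of $\stn$ gives directly $\sum_{i=1}^{\gammatn} \LVert \Giwn \RVert_2^2 = \stn^T \stn = \stn^T \mI \stn$. For the second piece, I would write $\gammatn \LVert \bmtn \RVert_2^2 = \gammatn^{-1} \LVert \sum_{i=1}^{\gammatn} \Giwn \RVert_2^2 = \gammatn^{-1} \sum_{i,j=1}^{\gammatn} \Giwn^T \Gjwn$, and observe that the double sum $\sum_{i,j} \Giwn^T \Gjwn$ is exactly $\stn^T \mE \stn$, since $\mE$ carries a $d\times d$ identity block in every block position and therefore pairs each block $\Giwn$ of $\stn$ with every block $\Gjwn$ through the Euclidean inner product. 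Combining the two pieces,
\[
	\Vtn = \frac{1}{\gammatn-1}\prl \stn^T \mI \stn - \frac{1}{\gammatn}\, \stn^T \mE \stn \prr
	= \stn^T \sql \frac{1}{\gammatn-1}\prl \mI - \frac{1}{\gammatn}\mE \prr \sqr \stn
	= \stn^T \Atn \stn ,
\]
which is the claimed identity.

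There is no serious obstacle in this argument; the only points needing care are bookkeeping of the block structure of $\mE$ — verifying that $\stn^T \mE \stn$ reproduces the full double sum of inner products $\Giwn^T \Gjwn$ and not merely its diagonal — and carrying the normalization $1/(\gammatn-1)$ through so that $\Atn$ emerges with exactly the stated constant. One should also note in passing that the manipulation presumes $\gammatn \geq 2$, which holds after the initialization phase of EE-Grad that picks each mini-batch oracle twice.
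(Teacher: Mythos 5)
Your proof is correct and follows essentially the same route as the paper's: both reduce $\Vtn$ to $\frac{1}{\gammatn-1}\prl \sum_i \Giwn^T\Giwn - \gammatn\, \bmtn^T\bmtn\prr$, identify the first sum with $\stn^T\stn$ and the double sum $\sum_{i,j}\Giwn^T\Gjwn$ with $\stn^T\mE\stn$. Your added remark that the manipulation requires $\gammatn\geq 2$ (guaranteed by the initialization) is a sensible point of care, though not part of the paper's argument.
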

\begin{proof}
Note that
\begin{align*}
	\Vtn  = \Tr \prl \bCtn \prr
%	       &= \frac{1}{\gammatn - 1} \sum_{i=1}^{\gammatn} 
%	       \prl \Giwn - \bmtn \prr^T \prl \Giwn - \bmtn  \prr\\
	       &= \frac{1}{\gammatn - 1} \prl \sum_{i=1}^{\gammatn}
	       \Giwn^T  \Giwn  - \gammatn  \bmtn ^T\bmtn \prr,
\end{align*}
where
\begin{align*}
	 \bmtn ^T\bmtn
%	= \frac{1}{\gammatn^2} \prl \sum_{i=1}^{\gammatn} 
%	\Giwn^T\prr \prl \sum_{j=1}^{\gammatn} \Gjwn\prr
	&= \frac{1}{\gammatn^2} \sum_{i=1}^{\gammatn} 
	\sum_{j=1}^{\gammatn}  \Giwn^T \Gjwn
	= \frac{1}{\gammatn^2} \stn^T \mE \stn.
\end{align*}
Noting $\sum_{i=1}^{\gammatn} \Giwn^T \Giwn = \stn^T \stn$, we conclude that
\begin{align*}
	\Vtn = \frac{1}{\gammatn - 1} \prl \stn^T\stn - \frac{1}{\gammatn} \stn^T\mE\stn \prr
	       =  \stn^T \Atn \stn.
\end{align*}
\end{proof}

\section{Hanson-Wright Inequality}
\label{app:HW}
\begin{lemma}
\label{lemma:HW}
Let $\bW = \sql W_1,\ldots,W_m\sqr^T \in \mathbb{R}^m$, $m>1$, where $W_i$ are zero-mean sub-Gaussian with a parameter $\sigma^2>0$.
Then, given an arbitrary matrix $\mA\in\mathbb{R}^{m\times m}$, we have, for any $\eps>0$, 
\begin{align*}
	\Pr \prl \bW^T \mA \bW - \Ew \sql  \bW^T \mA \bW \sqr > \eps \prr
	\leq \exp \prl -c 
	\min \prl \frac{\eps^2}{\sigma^4\Vert \mA \Vert_{\mathrm{F}}^2},\frac{\eps}{\sigma^2\Vert \mA \Vert} \prr \prr,
\end{align*}
where $\Vert \mA \Vert_{\mathrm{F}}$ and $\Vert \mA \Vert$ are Frobenius and operator norms of $\mA$, and $c>0$ is an absolute constant.
\end{lemma}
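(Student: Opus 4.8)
The plan is to prove this by the exponential-moment (Chernoff) method: I would bound $\Pr(\bW^T\mA\bW - \E[\bW^T\mA\bW] > \eps) \le e^{-\lambda\eps}\,\E\,e^{\lambda(\bW^T\mA\bW - \E[\bW^T\mA\bW])}$ and then optimize over $\lambda>0$ within the range in which the moment generating function (MGF) can be controlled. First I would symmetrize, replacing $\mA$ by $(\mA+\mA^T)/2$; this leaves the quadratic form unchanged and does not increase $\Vert\mA\Vert_{\mathrm{F}}$ or $\Vert\mA\Vert$. Then I would split the centered chaos into a diagonal part $\sum_i A_{ii}(W_i^2 - \E W_i^2)$ and an off-diagonal part $\sum_{i\ne j}A_{ij}W_iW_j$, bound the MGF of each separately over a common range of $\lambda$, and recombine the two estimates by the Cauchy--Schwarz inequality.

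For the diagonal part I would use that a zero-mean sub-Gaussian $W_i$ has a sub-exponential square, so that $\E\,e^{\lambda A_{ii}(W_i^2 - \E W_i^2)} \le e^{C\lambda^2\sigma^4 A_{ii}^2}$ whenever $|\lambda|\,|A_{ii}|\,\sigma^2 \le c_0$; multiplying over the independent coordinates and using $\sum_i A_{ii}^2 \le \Vert\mA\Vert_{\mathrm{F}}^2$ and $\max_i|A_{ii}| \le \Vert\mA\Vert$ yields an MGF bound $e^{C\lambda^2\sigma^4\Vert\mA\Vert_{\mathrm{F}}^2}$ valid for $|\lambda| \le c_0/(\sigma^2\Vert\mA\Vert)$. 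The off-diagonal term is the step I expect to be the main obstacle. Here I would invoke the standard decoupling inequality for quadratic chaos to replace $\sum_{i\ne j}A_{ij}W_iW_j$ by $C\sum_{i,j}A_{ij}W_iW_j' = C\,\bW^T\mA\bW'$ with $\bW'$ an independent copy of $\bW$; conditioning on $\bW'$ and using coordinatewise sub-Gaussianity gives $\E_{\bW}\,e^{\mu\,\bW^T\mA\bW'} \le e^{C\mu^2\sigma^2\Vert\mA\bW'\Vert_2^2}$; and then, via the singular value decomposition of $\mA$ (reducing, if necessary, to the Gaussian case, where an orthogonal change of coordinates preserves independence), I would bound $\E_{\bW'}\,e^{t\Vert\mA\bW'\Vert_2^2}$ by a product $\prod_k (1 - C t\sigma^2 s_k^2)^{-1/2}$ over the singular values $s_k$ of $\mA$, whose logarithm is at most a constant times $t\sigma^2\sum_k s_k^2 = t\sigma^2\Vert\mA\Vert_{\mathrm{F}}^2$ as long as $t\sigma^2 s_{\max}^2 = t\sigma^2\Vert\mA\Vert^2$ stays bounded away from the pole. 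Chaining these conditional estimates again produces a bound of the form $e^{C\lambda^2\sigma^4\Vert\mA\Vert_{\mathrm{F}}^2}$ on the range $|\lambda| \le c/(\sigma^2\Vert\mA\Vert)$.

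Combining the two parts gives $\E\,e^{\lambda(\bW^T\mA\bW - \E[\bW^T\mA\bW])} \le e^{C\lambda^2\sigma^4\Vert\mA\Vert_{\mathrm{F}}^2}$ for $0 < \lambda \le c/(\sigma^2\Vert\mA\Vert)$, after which I would finish by Chernoff: minimizing $e^{-\lambda\eps + C\lambda^2\sigma^4\Vert\mA\Vert_{\mathrm{F}}^2}$ over this interval. The unconstrained minimizer $\lambda^\star = \eps/(2C\sigma^4\Vert\mA\Vert_{\mathrm{F}}^2)$ is admissible precisely when $\eps$ is at most a constant multiple of $\sigma^2\Vert\mA\Vert_{\mathrm{F}}^2/\Vert\mA\Vert$, in which case one gets the sub-Gaussian tail $\exp(-c\,\eps^2/(\sigma^4\Vert\mA\Vert_{\mathrm{F}}^2))$; in the complementary regime $\lambda$ saturates at the endpoint $c/(\sigma^2\Vert\mA\Vert)$, producing the sub-exponential tail $\exp(-c\,\eps/(\sigma^2\Vert\mA\Vert))$. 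Taking the minimum over the two regimes yields the stated inequality, with $c>0$ the absolute constant coming out of the decoupling and Bernstein-type steps. The only genuinely delicate bookkeeping is keeping the constants in the diagonal and off-diagonal MGF bounds uniform so that a single $c$ governs both contributions; the rest is a routine chaining of sub-exponential moment estimates, and one may alternatively cite \cite{RudelsonV2013} for the full argument.
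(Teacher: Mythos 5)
Your proposal is essentially correct, but it takes a different route from the paper in the most literal sense: the paper does not prove Lemma~\ref{lemma:HW} at all. It is stated in the appendix ``for completeness'' and imported wholesale from the literature \cite{HansonW1971,RudelsonV2013}; the authors' only original work involving it is the application in Lemma~\ref{lemma:main}, where the Frobenius and operator norms of $\Atn$ are computed explicitly. What you have written is a faithful outline of the modern Rudelson--Vershynin proof: Chernoff bound on a restricted range of $\lambda$, symmetrization of $\mA$, the diagonal/off-diagonal split with a Bernstein-type MGF bound for the sub-exponential variables $W_i^2-\E W_i^2$, decoupling of the off-diagonal chaos, and the two-regime optimization that produces the $\min(\eps^2/(\sigma^4\Vert\mA\Vert_{\mathrm F}^2),\,\eps/(\sigma^2\Vert\mA\Vert))$ exponent. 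What your approach buys is self-containedness and visibility of where the absolute constant $c$ comes from --- which matters here, since $c$ propagates into the function $f$ in \eqref{eq:funcf} and hence into the algorithm itself; what the paper's approach buys is brevity and a correct attribution of a nontrivial result.

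One step in your sketch deserves a word of caution. You say the bound on $\E_{\bW'}\,e^{t\Vert\mA\bW'\Vert_2^2}$ follows ``via the singular value decomposition of $\mA$ (reducing, if necessary, to the Gaussian case, where an orthogonal change of coordinates preserves independence).'' For a general sub-Gaussian vector you cannot rotate coordinates and keep independence, so the SVD cannot be applied to $\bW'$ directly. The correct order of operations is the Gaussian linearization trick: write $e^{t\Vert\mA\bW'\Vert_2^2}=\E_{g}\,e^{\sqrt{2t}\,\langle g,\mA\bW'\rangle}$ for an independent standard Gaussian $g$, swap the expectations, use coordinatewise sub-Gaussianity of $\bW'$ to get $\E_{\bW'}e^{\sqrt{2t}\langle \mA^T g,\bW'\rangle}\le e^{t\sigma^2\Vert\mA^T g\Vert_2^2}$, and only then diagonalize --- the SVD and the product $\prod_k(1-2t\sigma^2 s_k^2)^{-1/2}$ are applied to the Gaussian $g$, where rotation invariance is legitimate. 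With that ordering your argument closes; otherwise it has a gap at exactly the point you flagged as the main obstacle.
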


\section{Concentration Result on the Trace of the Sample Covariance Matrices}
\label{app:main}
\begin{lemma}
\label{lemma:main}
Suppose that $\gammatn>1$.
Then the tail probability of $\Vtn$ satisfies, for any $\eps>0$,
\begin{align*}
	\Pr \prl \Vtn - \sigman \Sw > \eps \prr 
	\leq \exp \prl {- \prl \gammatn -1 \prr \phi \prl \eps \prr} \prr,
\end{align*}
where
\begin{align*}
	\phi \prl \eps \prr 
	&\defi \frac{ c \eps}{\beta P}\min\prl 1,\frac{\eps/d}{\beta P}\prr,
\end{align*}
for  $n=1,\ldots,N$, where $c>0$ is an absolute constant.
\end{lemma}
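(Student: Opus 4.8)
The plan is to combine Lemma~\ref{lemma:quadraticForm} with Lemma~\ref{lemma:HW}, so the whole argument reduces to estimating two matrix norms and one expectation. First I would recall from Lemma~\ref{lemma:quadraticForm} that $\Vtn = \stn^T \Atn \stn$, where the entries of $\stn$ are the coordinates of the $\gammatn$ i.i.d.\ mini-batch outputs $\GOnewn,\ldots,\Ggammawn$. Conditioning on $\bw$, each such output is $\DFw$ plus a zero-mean perturbation whose $i$th coordinate is sub-Gaussian with parameter $\sigman \mMw_{i,i}$ (because averaging $n$ independent copies, each sub-Gaussian with parameter $\mMw_{i,i}/\theta_n$, gives parameter $\mMw_{i,i}/n\theta_n = \sigman\mMw_{i,i}$). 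Since all diagonal entries of $\mMw$ are bounded by $\Sw\le P$ and $\sigman\le\beta$, every coordinate of the centered vector $\stn - \mathbf{1}\otimes\DFw$ is sub-Gaussian with a common parameter at most $\beta P$. The key subtlety is that Lemma~\ref{lemma:HW} requires a \emph{centered} quadratic form, whereas $\stn$ is not centered; but note that $\Atn$ annihilates the all-ones-type vector $\mathbf{1}\otimes v$ for any $v\in\mathbb R^d$, because $(\mI-\gammatn^{-1}\mE)(\mathbf{1}\otimes v) = \mathbf{1}\otimes v - \mathbf{1}\otimes v = 0$. Hence $\stn^T\Atn\stn = \tilde\stn^T\Atn\tilde\stn$ where $\tilde\stn \defi \stn - \mathbf{1}\otimes\DFw$ is centered, and Lemma~\ref{lemma:HW} applies directly to $\tilde\stn$ with $\sigma^2 = \beta P$.

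Next I would compute the three quantities feeding into Lemma~\ref{lemma:HW}. The mean: $\Ew[\tilde\stn^T\Atn\tilde\stn] = \Ew[\Vtn] = \sigman\Sw$, which is exactly the centering term in the theorem statement (this was already noted in the main text). The operator norm: $\Atn = (\gammatn-1)^{-1}(\mI - \gammatn^{-1}\mE)$, and $\mathbf{I} - \gammatn^{-1}\mE$ is $(\gammatn/ \text{...})$ times a projection --- its eigenvalues are $0$ (on the $d$-dimensional block-constant subspace) and $1$ (on its orthogonal complement), so $\Vert\Atn\Vert = (\gammatn-1)^{-1}$. The Frobenius norm: $\Vert\mI-\gammatn^{-1}\mE\Vert_{\mathrm F}^2 = \Tr\big((\mI-\gammatn^{-1}\mE)^2\big) = \Tr(\mI-\gammatn^{-1}\mE) = d\gammatn - \gammatn^{-1}\cdot d\gammatn = d(\gammatn-1)$ (using $\mE^2 = \gammatn\mE$ and $\Tr\mE = d\gammatn$), so $\Vert\Atn\Vert_{\mathrm F}^2 = d(\gammatn-1)/(\gammatn-1)^2 = d/(\gammatn-1)$.

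Plugging these into Lemma~\ref{lemma:HW} with $\sigma^2=\beta P$ gives
\begin{align*}
	\Pr\prl\Vtn - \sigman\Sw > \eps\prr
	\leq \exp\prl -c\min\prl \frac{\eps^2(\gammatn-1)}{(\beta P)^2 d},\ \frac{\eps(\gammatn-1)}{\beta P}\prr\prr
	= \exp\prl -(\gammatn-1)\,c\,\frac{\eps}{\beta P}\min\prl \frac{\eps/d}{\beta P},\ 1\prr\prr,
\end{align*}
which is precisely $\exp(-(\gammatn-1)\phi(\eps))$ with $\phi(\eps) = \frac{c\eps}{\beta P}\min(1,\frac{\eps/d}{\beta P})$, after factoring $(\gammatn-1)$ out of the minimum. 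I would conclude by noting $\gammatn>1$ is used to ensure $\Atn$ is well-defined and the exponent is negative.

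The main obstacle, and the one point that deserves care, is the centering step: a naive application of Hanson–Wright to the uncentered $\stn$ would leave a cross term $2\,\Ew[\tilde\stn]^T\Atn\tilde\stn$ plus a deterministic term $(\mathbf1\otimes\DFw)^T\Atn(\mathbf1\otimes\DFw)$, neither of which one wants. The clean fix is the observation that $\Atn(\mathbf1\otimes v)=0$ for all $v$, so $\Vtn$ depends on $\stn$ only through its centered part --- this is really the statement that the sample covariance is translation-invariant, which is obvious but must be invoked explicitly to legitimately set $\sigma^2=\beta P$ rather than something depending on $\|\DFw\|$. A secondary bookkeeping point is verifying the uniform sub-Gaussian bound $\beta P$ on the coordinates of $\tilde\stn$, which follows from $\mMw_{i,i}\le\Tr(\mMw)=\Sw\le P$ and $\sigman\le\beta$ together with the fact that an average of independent sub-Gaussians is sub-Gaussian with the averaged parameter; everything else is the routine norm computation above.
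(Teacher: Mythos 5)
Your proof is correct and follows essentially the same route as the paper's: write $\Vtn$ as the quadratic form from Lemma~\ref{lemma:quadraticForm}, exploit translation invariance to reduce to centered sub-Gaussian vectors, compute $\Vert\Atn\Vert_{\mathrm F}^2 = d/(\gammatn-1)$ and $\Vert\Atn\Vert = (\gammatn-1)^{-1}$, and apply Lemma~\ref{lemma:HW} with the uniform parameter bound $\beta P$. The only (cosmetic) differences are that you obtain the operator norm by recognizing $\mI-\gammatn^{-1}\mE$ as an orthogonal projection rather than by explicit maximization, and you insert the bound $\sigman\mMw_{i,i}\le\beta P$ directly instead of first deriving the rate $\phi_n$ with parameter $\sigman\Sw$ and then lower-bounding $\phi_n\ge\phi$.
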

\begin{proof}
Note that
$
	\mI - \prl 1/\gammatn \prr \mE
$
is a $d\gammatn \times d\gammatn$ block matrix with $d \times d$ blocks,
where the diagonal and non-diagonal matrices are given by
$\frac{\gammatn - 1}{\gammatn} \mI$ and $-\frac{1}{\gammatn}\mI$, respectively,
and $\LVert \mI \RVert_{\mathrm{F}}^2 = d$.
This implies
\begin{align*}
	\LVert \Atn \RVert_{\mathrm{F}}^2
	& = \frac{1}{\prl \gammatn - 1 \prr^2} 
	\prl \gammatn \prl \frac{ \gammatn-1}{\gammatn}\prr^2 \LVert \mI \RVert_{\mathrm{F}}^2 
	       + \prl \gammatn - 1\prr \gammatn \frac{1}{\gammatn^2} \LVert \mI \RVert_{\mathrm{F}}^2 \prr\nn\\
	 &= \frac{d}{\gammatn - 1}.
\end{align*}

Next suppose that
$\bv = \prl \bv_1^T, \ldots, \bv_{\gammatn}^T \prr^T \in \mathbb{R}^{d\gammatn}$
such that $\bv_i\in\mathbb{R}^d$ and $\Vert \bv \Vert_2 = 1$.
Then we write
\begin{align*}
	\LVert \Atn \bv \RVert_2^2
	&= \frac{1}{\prl \gammatn - 1 \prr^2}
	      \prl \LVert \bv \RVert_2^2 + \frac{1}{\gammatn^2} \LVert \mE\bv \RVert_2^2
	      - \frac{2}{\gammatn} \bv^T \mE \bv \prr\nn\\
	&=  \frac{1}{\prl \gammatn - 1 \prr^2}
	      \prl 1 - \frac{1}{\gammatn} \LVert \sum_{i=1}^{\gammatn} \bv_i \RVert_2^2 \prr
	 \leq \frac{1}{\prl \gammatn - 1\prr^2},
\end{align*}
where equality is achieved by
$\bv = \prl \bv_1^T, \ldots, \bv_{\gammatn}^T \prr^T$
such that $\bv_1 = \prl \frac{1}{\sqrt{2}},0,\ldots,0\prr $, $\bv_2 = -\bv_1$,
and $\bv_i = \prl 0,\ldots,0\prr$ for $i=3,\ldots,\gammatn$.
This yields
\begin{align*}
	\LVert \Atn \RVert
	= \sup_{\LVert \bv \RVert_2 = 1} \LVert \Atn \bv\RVert_2
	=  \prl \gammatn - 1 \prr^{-1}.
\end{align*}

We finally note that the trace of the sample covariance matrix can be written as
\begin{align*}
	\Vtn
	&= \frac{1}{\gammatn - 1} \sum_{i=1}^{\gammatn} \prl \Giwn - \bmtn \prr^T \prl \Giwn - \bmtn \prr\nn\\
	&= \frac{1}{\gammatn - 1} \sum_{i=1}^{\gammatn} \prl \bQ_{i}\prl n\prr - \bqtn \prr^T  \prl \bQ_{i}\prl n\prr - \bqtn \prr,
\end{align*}
where $\bQ_{i}\prl n\prr \defi \Giwn - \DFw$ for $i \in \sql \gammatn \sqr$, and 
$
	\bqtn = \prl 1/ \gammatn \prr \sum_{i=1}^{\gammatn} \bQ_{i}\prl n\prr.
$
This implies the same expression holds for the mean-removed versions of $\Giwn$s.
Hence, we can assume that $\Ew \sql \Giwn \sqr = 0$.
We apply Lemma~\ref{lemma:HW} to $\Vtn$ by using
Lemma~\ref{lemma:quadraticForm} to get, for any $\eps>0$,
\begin{align*}
	\Pr \prl \Vtn - \sigman \Sw > \eps \prr 
	\leq \exp \prl {- \prl \gammatn -1 \prr \phi_n \prl \eps \prr} \prr,
\end{align*}
where
\begin{align*}
	\phi_n \prl \eps \prr 
	&\defi \frac{ c \eps}{\sigman \Sw }\min\prl 1,\frac{\eps/d}{\sigman \Sw}\prr,
\end{align*}
which is strictly increasing in $\eps$, for $n \in \sql N \sqr$, where $c>0$ is an absolute constant.
Finally, we note
$
	\phi_n \prl \eps \prr \geq \phi \prl \eps\prr,
$
since we assumed
$
	\max_{n=1,\ldots,N} \sigman \leq \beta,
$
and $\Sw \leq P$.
This concludes the proof.
\end{proof}

\section{Pseudo-Regret Bound}
\label{app:regret}
\begin{lemma}
\label{lemma:regret}
For any $\alpha>2$, the pseudo-regret term in \eqref{eq:gammaDelta} satisfies, for any $T$,
\begin{align*}
	\Ew \sql \sum_{n=1}^N \Delta_n \gammaTn \sqr \Sw
	 \leq \prl \Cone \ln\prl T \prr + \Ctwo \prr \Sw,
\end{align*}
where
\begin{align}
	\Cone 
	\defi\sum_{n:\Delta_n>0} \frac{\alpha \Delta_n }{\phi \prl \Delta_n\Sw /2\prr},
	\;\;\;
	\Ctwo
	\defi \prl \sum_{n=1}^{N} \Delta_n \prr \frac{2\prl \alpha-1\prr}{\alpha-2}. \label{eq:constants}
\end{align}
\end{lemma}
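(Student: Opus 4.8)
The plan is to carry out the standard UCB-style pseudo-regret analysis announced in the proof of Theorem~\ref{thm:main}, adapted to the fact that \eqref{eq:pickOracle} selects the oracle with the \emph{smallest} lower-confidence value for $\sigman\Sw$ (rather than the largest upper-confidence value for a reward). Since $\Sw>0$ is a fixed scalar, it suffices to bound $\Ew\sql\sum_{n=1}^N\Delta_n\gammaTn\sqr$, and since $\Delta_{\nOpt}=0$ only oracles with $\Delta_n>0$ contribute; I would bound $\Ew\sql\gammaTn\sqr$ for each such $n$ separately. Fixing $n$ with $\Delta_n>0$, I would split the rounds into an ``early'' phase, consisting of those $t$ for which the count $\gammatn$ of pulls of oracle $n$ available before round $t$ is below a threshold $\ell_n$, and a ``late'' phase; the early phase contributes at most $\ell_n$ pulls of oracle $n$ by construction, so the task reduces to showing that a late pull of oracle $n$ is rare.

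For the late phase I would use \eqref{eq:pickOracle} directly: if $n_t=n$ at a late round, then $\Vtn-f(w_n)\le\VtnOpt-f(w_*)$, where $w_n\defi\alpha\ln t/(\gammatn-1)$ and $w_*\defi\alpha\ln t/(\gammatnOpt-1)$ are the two confidence radii. A short deterministic argument shows this forces at least one of the following: (a) the optimal oracle is overestimated, $\VtnOpt-\sigmaOpt\Sw>f(w_*)$; (b) oracle $n$ is underestimated, $\Vtn-\sigman\Sw<-f(w_n)$; or (c) the confidence radius for oracle $n$ is still too large, $2f(w_n)\ge\Delta_n\Sw$, which by $\phi=f^{-1}$ and the strict monotonicity of $\phi$ (noted in the proof of Lemma~\ref{lemma:main}) is equivalent to $\gammatn-1\le\alpha\ln t/\phi(\Delta_n\Sw/2)\le\alpha\ln T/\phi(\Delta_n\Sw/2)$. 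Choosing $\ell_n$ to be the smallest integer exceeding $1+\alpha\ln T/\phi(\Delta_n\Sw/2)$ rules out (c) on every late round, so a late pull of oracle $n$ implies (a) or (b).

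It then remains to bound $\Pr((a))$ and $\Pr((b))$ and sum over rounds. Because the sample sizes $\gammatn,\gammatnOpt$ are themselves random, I would union-bound over their possible values (each is at least $2$ after the initialization block) and, for a fixed value $s$, apply Lemma~\ref{lemma:main} with $\eps=f(\alpha\ln t/(s-1))$: since $\phi=f^{-1}$, the exponent $(s-1)\phi(\eps)$ is exactly $\alpha\ln t$, so each term is at most $t^{-\alpha}$ and the union over $s$ is at most $t^{1-\alpha}$. For event (b), the same Hanson--Wright computation used for Lemma~\ref{lemma:main} applies verbatim to $-\Atn$ in place of $\Atn$ (Frobenius and operator norms are unchanged), which yields the matching lower-tail bound. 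Summing $\sum_t t^{1-\alpha}$ converges since $\alpha>2$ (it is at most $(\alpha-1)/(\alpha-2)$), so the expected number of late pulls of oracle $n$ is at most $2(\alpha-1)/(\alpha-2)$. Hence $\Ew\sql\gammaTn\sqr\le\ell_n+2(\alpha-1)/(\alpha-2)$; multiplying by $\Delta_n$, summing over $n$ with $\Delta_n>0$, and reinstating $\Sw$ produces the bound $(\Cone\ln T+\Ctwo)\Sw$, with $\Cone$ coming from the $\ell_n$ terms and $\Ctwo$ from the tail sums, as in \eqref{eq:constants}. I expect the main obstacle to be the bookkeeping around the random sample sizes --- ensuring that Lemma~\ref{lemma:main}, stated for a fixed number of samples, is legitimately applied through the union bound --- together with the two auxiliary observations that Hanson--Wright is two-sided (needed for the lower tail in (b)) and that $\phi$ is strictly increasing (needed to turn (c) into a clean threshold on $\gammatn$); once these are in place the constants $\Cone,\Ctwo$ follow from routine geometric and integral estimates.
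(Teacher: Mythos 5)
Your proposal is correct and follows essentially the same route as the paper's proof: the same three-event decomposition (overestimation of the optimal oracle, underestimation of oracle $n$, and an insufficient pull count), the same contradiction with the selection rule \eqref{eq:pickOracle}, the same union bound over the random sample sizes combined with the concentration bound of Lemma~\ref{lemma:main}, and the same geometric/integral tail sum giving $2(\alpha-1)/(\alpha-2)$. If anything you are slightly more careful than the paper, which asserts the bound for $E_{t,2}$ ``by the same argument'' without noting that a matching lower-tail (two-sided Hanson--Wright) bound is what is actually needed there.
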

\begin{proof}
We follow along similar steps to the proof of Theorem 2.1 in \cite{BubeckC2012}.
Suppose that $n_t = n$, and consider the events
\begin{align*}
	E_{t,1} &\defi \crl \VtnOpt - f \prl \frac{\alpha\ln\prl t\prr}{\gammatnOpt-1} \prr \geq \sigmaOpt \Sw \crr,\\
	E_{t,2} &\defi \crl \Vtn <   \sigman\Sw - f \prl \frac{\alpha\ln\prl t\prr}{\gammatn-1} \prr \crr,\\
	E_{t,3} &\defi \crl \gammatn < 1 + \frac{\alpha\ln\prl T \prr}{\phi\prl \Delta_n\Sw/2\prr}\crr. \nn
\end{align*}
We claim that $E_{t,1} \cup E_{t,2} \cup E_{t,3}$ must occur.
Assume, by contradiction, that $E_{t,i}$ are all false.
We obtain
\begin{align}
	\VtnOpt - f \prl \frac{\alpha\ln\prl t\prr}{\gammatnOpt - 1} \prr 
	 < \sigmaOpt \Sw
	 &= \sigman \Sw - \Delta_n \Sw\nn\\
	& \leq \Vtn +  f \prl \frac{\alpha\ln\prl t\prr}{\gammatn - 1} \prr - \Delta_n \Sw \label{eq:contradiction1}.
\end{align}
By assumption $E_{t,3}$ is false, and we have
$
	 \gammatn - 1 \geq \alpha\ln\prl T \prr/\phi\prl \Delta_n \Sw /2\prr,
$
which is equivalent to
\begin{align}
	 \Delta_n\Sw \geq 2f \prl \frac{\alpha\ln\prl T \prr}{\gammatn - 1} \prr, \label{eq:contradiction2}
\end{align}
If we use \eqref{eq:contradiction2} in \eqref{eq:contradiction1},
then we obtain the following result, which contradicts the rule in \eqref{eq:pickOracle}:
\begin{align}
	\VtnOpt - f \prl \frac{\alpha\ln\prl t\prr}{\gammatnOpt - 1} \prr 
       < \Vtn -  f \prl \frac{\alpha\ln\prl t\prr}{\gammatn - 1} \prr. \nn
\end{align}

For all $n$ such that $\Delta_n>0$, we define
\begin{equation*}
	M_n \defi \left \lceil \frac{\alpha \ln\prl T \prr}{\phi\prl \Delta_n\Sw /2\prr} \right \rceil. \nn
\end{equation*}
We next upper bound $\Ew \sql \gammaTn \sqr$ as
\begin{align}
	\Ew \sql \gammaTn \sqr
	%&= \E \sql \sum_{t=1}^T \1\crl n_t = n \crr \sqr\nn\\
	&= \Ew \sql \sum_{t=1}^T \1\crl n_t = n \text{ and } \gammatn < M_n \crr \sqr
	+ \Ew \sql \sum_{t=1}^T \1\crl n_t = n \text{ and } \gammatn \geq M_n \crr \sqr \nn\\
	&\leq  M_n
	  +  \Ew \sql \sum_{t=M_n +1}^T \1\crl n_t = n \text{ and } \gammatn \geq M_n \crr \sqr. \label{eq:upperBound1}
\end{align}
In \eqref{eq:upperBound1}, we observe that $\gammatn \geq M_n$ is equivalent to $E_{t,3}$ being false,
which is further equivalent to $E_{t,1} \cup E_{t,2}$ being true, i.e., $E_{t,1}$ or $E_{t,2}$ must occur.
Therefore we can further upper bound \eqref{eq:upperBound1} as
\begin{align}
	 \Ew \sql \gammaTn \sqr
	 &\leq  M_n
	  +  \Ew \sql \sum_{t=M_n+1}^T \1\crl E_{t,1} \text{ or } E_{t,2} \text{ is true} \crr \sqr\nn\\
	 &=  M_n
	  + \sum_{t=M_n+1}^T \Pr \prl  E_{t,1} \cup E_{t,2} \text{ is true} \prr \nn \\
	  &\leq M_n
	  +   \sum_{t=M_n+1}^T \Pr \prl  E_{t,1} \prr  + \sum_{t=M_n+1}^T \Pr \prl E_{t,2} \prr.
	  \label{eq:upperBound2}
\end{align}
where we used the union bound.
We upper bound $\Pr \prl E_{t,1}\prr$ for each $t=M_n+1,\ldots,T$.
Note that
\begin{align}
	\Pr \prl  E_{t,1} = 1 \prr
	= \Pr \prl  \VtnOpt - f \prl \frac{\alpha\ln\prl t\prr}{\gammatnOpt - 1} \prr \geq \sigmaOpt \Sw\prr,
	\label{eq:A1}
\end{align}
where $\gammatnOpt$ can take values in $\crl 2,\ldots,t \crr$.
Hence we apply the union bound in \eqref{eq:A1}, which yields
\begin{align}
	\Pr \prl  E_{t,1} = 1 \prr
	&\leq \sum_{s=1}^t \Pr \prl  \Vsn - f \prl \frac{\alpha\ln\prl t\prr}{s} \prr \geq \sigmaOpt \Sw \prr
	\leq \sum_{s=1}^t \frac{1}{t^{\alpha}} \label{eq:upperBound3}
	 = t^{1-\alpha},
\end{align}
where \eqref{eq:upperBound3} follows from \eqref{eq:epsilonDelta}.
Here, $\Vsn$ is the trace of a sample covariance matrix given $s+2$ independent random vectors with
sub-Gaussian components with the parameter $\sigmaOpt \Sw$.
Hence we obtain
\begin{align}
	\sum_{t=M_n+1}^T \Pr \prl  E_{t,1} = 1 \prr
	\leq  \sum_{t=M_n+1}^T t^{1-\alpha}
	\leq  \sum_{t=1}^{\infty} t^{1-\alpha}
	\leq 1 + \int_{1}^{\infty} t^{1-\alpha} dt
	= \frac{\alpha-1}{\alpha-2}.  \label{eq:upperBound4}
\end{align}
The same upper bound holds for $\Pr \prl  E_{t,2} = 1 \prr$ so that
\begin{align*}
	\sum_{t=M_n+1}^T \Pr \prl  E_{t,2} = 1 \prr
	\leq \frac{\alpha-1}{\alpha-2}.
\end{align*}
By incorporating these upper bounds
into \eqref{eq:upperBound2} we obtain
\begin{align*}
	\Ew \sql \gammatn \sqr
	\leq M_n + 2 \prl \alpha -1 \prr /\prl \alpha-2 \prr.
\end{align*}
Finally we use this result to get
\begin{align}
	\Ew \sql \sum_{n=1}^N \Delta_n \gammaTn \sqr \Sw
	 \leq \prl \Cone \ln\prl T \prr + \Ctwo \prr \Sw, \nn
 \end{align}
 where $\Cone$ and $\Ctwo$ are defined in \eqref{eq:constants}.
 \end{proof}
 
\end{document}